
\documentclass[onecolumn,12pt]{IEEEtran}

\usepackage{subfig}
\usepackage{etoolbox}
\usepackage{tikz}
\usepackage{xr}
\usepackage{graphicx}
\usepackage{algorithm,algorithmic,color}
\usepackage{amsmath, amsfonts,amsthm,amssymb}
\usepackage{paralist}

\newcommand\1[1]{\mathbb{I}_{\left\{#1\right\}}}
\newtheorem{lemma}{Lemma}\newtheorem{theorem}{Theorem}\newtheorem{corollary}{Corollary}\newtheorem{conjecture}{Conjecture}
\newtheorem{proposition}{Proposition}
\newcommand{\post}[2]{\begin{center} \includegraphics[width=#2]{#1} \end{center} }
\newcommand \E[1]{\mathbb{E}[#1]}


\begin{document}
%

\title{Jointly Clustering Rows and Columns of Binary Matrices: Algorithms and Trade-offs}

\author{
    \IEEEauthorblockN{Jiaming Xu\IEEEauthorrefmark{1}, Rui Wu\IEEEauthorrefmark{1}, Kai Zhu\IEEEauthorrefmark{2}, Bruce Hajek\IEEEauthorrefmark{1}, R. Srikant\IEEEauthorrefmark{1}, Lei Ying\IEEEauthorrefmark{2}}\\
    \IEEEauthorblockA{\IEEEauthorrefmark{1}University of Illinois at Urbana-Champaign\\
    }
    \IEEEauthorblockA{\IEEEauthorrefmark{2}Arizona State University
    }
}

\maketitle
%
%

%
%
\begin{abstract}
In standard clustering problems, data points are represented
by vectors, and by stacking them together, one forms a data
matrix with row or column cluster structure.
In this paper, we consider a class of binary matrices, arising in many
applications, which exhibit both row and column cluster
structure, and our goal is to exactly recover the underlying
row and column clusters by observing only a small fraction
of noisy entries. We first derive a lower bound on the minimum number of observations needed for exact
cluster recovery.
Then, we propose three algorithms with different runtime and compare the number of observations
needed by them for successful cluster recovery. Our analytical results show smooth time-data trade-offs: one can gradually reduce the computational complexity when
increasingly more observations are available.


\end{abstract}

\section{Introduction}
Data matrices exhibiting both row and column cluster structure,
arise in many applications, such as collaborative filtering, gene expression analysis,
and text mining. For example, in recommender systems, a rating matrix can be formed with
rows corresponding to users and columns corresponding to items, and similar users and items form clusters.
In DNA microarrays, a gene expression matrix can be formed with rows corresponding to patients and columns corresponding to genes,
and similar patients and genes form clusters. Such row and column cluster structure is of great scientific interest and practical importance.
For instance, the user and movie cluster structure  is crucial for predicting user preferences and making accurate item recommendations \cite{York03}.
The patient and gene cluster structure reveals functional relations among genes and helps disease detection \cite{Madeira04,Jiang04}.
In practice, we usually only observe a very small fraction of entries in these data matrices,
possibly contaminated with noise, which obscures the intrinsic cluster structure.
For example, in Netflix movie dataset, about $99\% $ of movie ratings are missing and the observed ratings are noisy \cite{Volinsky09}.


In this paper, we study the problem of inferring hidden row and column cluster structure in binary data matrices from a few noisy observations.
We consider a simple model introduced in \cite{Aditya11,Dabeer12} for generating binary data matrix from underlying row and column clusters.
In the context of movie recommender systems, our model assumes that users and movies each form equal-sized clusters. Users in the same cluster
give the same rating to movies in the same cluster, where ratings are either $+1$ or $-1$ with $+1$ being ``like'' and $-1$ being ``dislike''. Each rating is
flipped independently with a fixed flipping probability less than $1/2$, modeling the noisy user behavior and the fact that users (movies) in the same cluster do not necessarily give (receive) identical ratings. Each rating is further erased independently with a erasure probability, modeling the fact that some ratings are not observed. Then, from the observed noisy ratings, we aim to {\it exactly} recover the underlying user and movie clusters, i.e., jointly cluster the rows and columns of the observed rating matrix.

The binary assumption on data matrices is of practical interest. Firstly, in many real datasets like Netflix dataset and DNA microarrays, estimation of entry values appears to be very unreliable, but the task of determining whether an entry is $+1$ or $-1$ can be done more reliably \cite{Dabeer12}. Secondly, in recommender systems like rating music on Pandora or rating posts on sites such as Facebook and MathOverflow, the user ratings are indeed binary \cite{Davenport12}.
The equal-sized assumption on cluster size is just for ease of presentation and can be relaxed to allow for different cluster sizes.

The hardness of our cluster recovery problem is governed by the erasure probability and cluster size. Intuitively, recovery becomes harder when the erasure probability increases, meaning fewer observations, and the cluster size decreases, meaning that clusters are harder to detect. The first goal of this paper is to understand when exact cluster recovery is possible or fundamentally impossible. Furthermore, our cluster recovery problem poses a computational challenge: An algorithm exhaustively searching over all the possible row and column cluster structures would have a time complexity exponentially increasing with the matrix dimension. The second goal of this paper is to understand how the computational complexity of our cluster recovery problem changes when increasingly more observations are available.


In this paper, our contributions are as follows. We first derive a lower bound on the minimum number of observations needed for exact cluster recovery as a function of matrix dimension and cluster size. Then we propose three algorithms with different runtimes and compare the number of observations
needed by them for successful cluster recovery.
\begin{compactitem}
\item The first algorithm directly searches for the optimal clustering of rows and columns separately; it is combinatorial in nature and takes exponential-time but achieves the best statistical performance among the three algorithms in the noiseless setting.
\item By noticing that the underlying true rating matrix is a specific type of low-rank matrix, the second algorithm recovers the clusters by solving a nuclear norm regularized convex optimization problem, which is a popular heuristic for low rank matrix completion problems; it takes polynomial-time but has less powerful statistical performance than the first algorithm.
\item The third algorithm applies spectral clustering to the rows and columns separately and then performs a joint clean-up step; it has lower computational complexity than the previous two algorithms, but less powerful statistical performance. We believe that this is the first such performance guarantee for exact cluster recovery, with a growing number of clusters, using spectral clustering.
\end{compactitem}
These algorithms are then compared with a simple nearest-neighbor clustering algorithm proposed in \cite{Dabeer12}.
Our analytical results show smooth time-data trade-offs: when increasingly more observations are available,
one can gradually reduce the computational complexity  by applying simpler algorithms while still achieving the desired performance. Such time-data trade-offs is of great practical interest for statistical learning problems involving large datasets \cite{chandrasekaran2013tradeoff}.


The rest of the paper is organized as follows. In Section~\ref{Sec:RelatedWork}, we discuss related work. In Section~\ref{Sec:Model}, we formally introduce our model and main results.
The lower bound is presented in Section~\ref{Sec:LowerBound}. The combinatorial method, convex method, spectral method are studied in Section~\ref{Sec:CombinatorialMethod}, Section~\ref{Sec:ConvexMethod} and Section~\ref{Sec:SpectralMethod}, respectively. The proofs are given in Section~\ref{Sec:Proofs}. The simulation results are presented in Section \ref{Sec: Sim}. Section \ref{Sec:Conclusion} concludes the paper with remarks.

\section{Related work}\label{Sec:RelatedWork}

In this section, we point out some connections of our model and results to prior work. There is a vast literature on clustering and we only focus on theoretical works with rigorous performance analysis. More detailed comparisons are provided after we present the theorems.

\subsection{Graph clustering}
Much of the prior work on graph clustering, as surveyed in \cite{Fortunato10}, focuses on graphs with a single node type, where nodes in the same cluster are more likely to have edges among them. A low-rank plus sparse matrix decomposition approach is proved to exactly recover the clusters with the best known performance guarantee in \cite{Chen12}. The same approach is used to recover the clusters from a partially observed graph in \cite{Chen13}. A spectral method for exact cluster recovery is proposed and analyzed in \cite{McSherry01} with the number of clusters {fixed}. More recently, \cite{Yu11} proved an upper bound on the number of nodes ``mis-clustered'' by a spectral clustering algorithm in the high dimensional setting with a growing number of clusters.

In contrast to the above works, in our model, we have a labeled bipartite graph with two types of nodes (rows and columns). Notice that there are no edges among nodes of the same type and cluster structure is defined for the two types separately. In this sense, our cluster recovery problem can be viewed as a natural generalization of graph clustering problem to labeled bipartite graphs.  In fact, our second algorithm via convex programming is inspired by the work \cite{Chen12, Chen13}.

A model similar to ours but with a fixed number of clusters has been considered in \cite{mastom11}, where the spectral method plus majority voting is shown to {\it approximately} predict the rating matrix. However, our third algorithm via spectral method is shown to achieve exact cluster and rating matrix recovery with a growing number of clusters. This is the first theoretical
result on spectral method for exact cluster recovery in
with a growing number of clusters to our knowledge.

\subsection{Biclustering}
Biclustering \cite{Hartigan72,ExpressionData00,Madeira04, Busygin08} tries to find sub-matrices (which may overlap) with particular patterns in a data matrix. Many of the proposed algorithms are based on heuristic searches without provable performance guarantees. Our cluster recovery problem can be viewed as a special case where the data matrix consists of non-overlapping sub-matrices with constant binary entries, and our paper provides a thorough study of this special biclustering problem. Recently, there is a line of work studying another special case of biclustering problem, which tries to detect a single small submatrix with elevated mean in a large fully observed noisy matrix \cite{Kolar2011}. Interesting statistical and computational trade-offs are summarized in \cite{balakrishnan2011}.

\subsection{Low-rank matrix completion}
Under our model, the underlying true data matrix is a specific type of low-rank matrix. If we recover the true data matrix, we immediately get the user (or movie) clusters by assigning the identical rows (or columns) of the matrix to the same cluster. In the noiseless setting with no flipping, the nuclear norm minimization approach \cite{Candes10,Candes12,Recht11} can be directly applied to recover the true data matrix and further recover the row and column clusters.  Alternate minimization is another popular and empirically successful approach for low-matrix completion \cite{Volinsky09}. However, it is harder to analyze and the performance guarantee is weaker than nuclear norm minimization \cite{Sanghavi12}. In the low noise setting with the flipping probability restricting to be a small constant, the low-rank plus sparse matrix decomposition approach \cite{Sanghavi11,Candes11,ChenIT13} can be applied to exactly recover data matrix and further recover the row and column clusters.

The performance guarantee for our second algorithm via convex programming is better than these previous approaches and it allows the flipping probability to be any constant less than $1/2$. Moreover, our proof turns out to be much simpler. The recovery of our true data matrix can also be viewed as a specific type of one-bit matrix completion problem recently studied in \cite{Davenport12}. However, \cite{Davenport12} only focuses on approximate matrix recovery and the results there cannot be used to recover row and column clusters.

\begin{table*}[t!]
    \renewcommand{\captionfont}{\small}               
    \renewcommand{\captionlabelfont}{\small}          
    \small                                       
    \centering
    \renewcommand{\arraystretch}{1.6} 
    \begin{tabular}[c]{|c|l|l|c|c|}
      \hline
      & parameter regime & $\#$ of observations & runtime  & remark\\
      \hline
      lower bound & $nK^2(1-\epsilon)^2 = O(1) $ & $m =O \left( \frac{n^{1.5}}{K} \right) $ & & \\
      \hline
      combinatorial method & $nK(1-\epsilon)^2 =\Omega(\log n) $ & $m = \Omega( \frac{n^{1.5} \sqrt{\log n} }{ \sqrt{K } } )$  & exponential  &  assuming noiseless, i.e., $p = 0$ \\
      \hline
      convex method & $K(1-\epsilon) = \Omega( \log n) $ & $m = \Omega(\frac{n^2 \log n}{K} )$ & polynomial & assuming Conjecture~\ref{result:conjecture} holds \\
      \hline
      spectral method & $K(1-\epsilon) = \Omega( r\log^2 n)$ & $m = \Omega( \frac{n^3 \log^2 n}{K^2} )$ & $O(n^3)$ & \\
      \hline
      nearest-neighbor clustering &  $n(1-\epsilon)^2 =\Omega(\log n)$ & $m = \Omega( n^{1.5} \sqrt{\log n}) $ & $O(mr)$ & \\
      \hline
  \end{tabular}
  \caption{Main results: comparison of a lower bound and four clustering algorithms. }
  \label{table:comparison}
\end{table*}


\section{Model and Main Results} \label{Sec:Model}
In this section, we formally state our model and main results.
\subsection{Model}
Our model is described in the context of movie recommender systems, but it is applicable to other systems with binary data matrices having row and column cluster structure.
Consider a movie recommender system with $n$ users and $n$ movies. Let $R$ be the rating matrix of size $n\times n$ where $R_{ij}$ is the rating user $i$ gives to movie $j$. Assume both users and movies form $r$ clusters of size $K = n/r$. Users in the same cluster give the same rating to movies in the same cluster. The set of ratings corresponding to a user cluster and a movie cluster is called a block. Let $B$ be the \emph{block rating matrix} of size $r\times r$ where $B_{kl}$ is the \emph{block rating} user cluster $k$ gives to movie cluster $l$. Then the rating $R_{ij} = B_{kl}$ if user $i$ is in user cluster $k$ and movie $j$ is in movie cluster $l$. Further assume that entries of $B$ are independent random variables which are $+1$ or $-1$ with equal probability. Thus, we can imagine the rating matrix as a block-constant matrix with all the entries in each block being either $+1$ or $-1$.
Observe that if $r$ is a fixed constant, then users from two different clusters have the same ratings for all movies with some positive probability, in which case it is impossible to differentiate between these two clusters. To avoid such situations, assume $r$ is at least $\Omega(\log n)$.

Suppose each entry of $R$ goes through an independent binary symmetric channel with flipping probability $p<1/2$, representing noisy user behavior, and an independent erasure channel with erasure probability $\epsilon$, modeling the fact that some entries are not observed. The expected number of observed ratings is $m = n^2(1-\epsilon)$. We assume that $p$ is a constant throughout the paper and $\epsilon$ could converge to $1$ as $n\to \infty$. Let $R'$ denote the output of the binary symmetric channel and $\Omega$ denote the set of non-erased entries. Let $\widehat{R}_{ij} = R'_{ij}$ if $(i, j) \in \Omega$ and $\widehat{R}_{ij}=0$ otherwise. The goal is to exactly recover the row and column clusters from the observation $\widehat{R}$.


\subsection{Main Results}
The main results are summarized in Table~\ref{table:comparison}. Note that these results do not explicitly depend on $p$. In fact, as $p$ is assumed to be a constant strictly less than $1/2$, it affects the results by constant factors.


The parameter regime where exact cluster recovery is fundamentally impossible for any algorithm is proved in Section~\ref{Sec:LowerBound}. The combinatorial method, convex method and spectral method are studied in Section~\ref{Sec:CombinatorialMethod}, Section~\ref{Sec:ConvexMethod} and Section~\ref{Sec:SpectralMethod}, respectively. We only analyze the combinatorial method in the noiseless case where $p = 0$, but we believe similar result is true for the noisy case as well. The parameter regime in which the convex method succeeds is obtained by assuming that a technical conjecture holds, which is justified through extensive simulation. The parameter regime in which the spectral method succeeds is obtained for the first time for exact cluster recovery with a growing number of clusters. The nearest-neighbor clustering algorithm was proposed in \cite{Dabeer12}. It clusters the users by finding the $K-1$ most similar neighbors for each user. The similarity between user $i$ and $i'$ is measured by the number of movies with the same observed rating, i.e.,
\begin{align*}
s_{i i'} = \sum_{j = 1}^n \1{\widehat{R}_{i j} \neq 0 } \1{\widehat{R}_{i'j} \neq 0 } \1{ \widehat{R}_{ij}=\widehat{R}_{i'j} },
\end{align*}
where $\1{\cdot}$ is an indicator function. Movies are clustered similarly. It is shown in \cite{Dabeer12} that the nearest-neighbor clustering algorithm exactly recovers user and movie clusters when $n(1-\epsilon)^2>C\log n$ for a constant $C$.

The number of observations needed for successful cluster recovery can be derived from the corresponding parameter regime using the identity $m=n^2(1-\epsilon)$ as shown in Table~\ref{table:comparison}. For better illustration, we visualize our results in Figure~\ref{fig:summary}. In particular, we take $\log (m/n)$ as $x$-axis and $\log K$ as $y$-axis and normalize both axes by $\log n$. Since exact cluster recovery becomes easy when the number of observations $m$ and cluster size $K$ increase, we expect that exact cluster recovery is easy near $(1, 1)$ and hard near $(0, 0)$.

\begin{figure}[h!]
\begin{center}
\scalebox{1}{\begin{tikzpicture}[scale = 2.5, font = \small, thick]
\draw[->] (0, 0) node [below left] {$O$}-- (2.3, 0) node [right]{\large $\frac{\log (m/n)}{\log n}$};
\draw[->] (0, 0) -- (0, 2.3) node [left]{\large $\frac{\log K}{\log n}$};
\draw (2, 0) -- (2, 2);
\draw (0, 2) -- (2, 2);
\node [left] at (0, 2) {$1$};
\node [below] at (2, 0) {$1$};
\node [left] at (0,1) {$1/2$};
\node [below] at (1, 0) {$1/2$};
\node [below right] at (2, 1) {$1/2$};
\draw[color = green,  text = black, line width = 1] (1, 0) node [above right] {$A$} -- (1, 2) node [above] {$D$};
\draw[color = red, text = black,line width = 1 ] (0, 2) -- (2, 0) node [above right] {$B$};
\path[fill = black!20] (0, 0) -- (0, 1) -- (1, 0) -- cycle;
\draw[color = yellow, text = black, line width = 1, line cap = round, shorten <=1, shorten >= 1] (1,0) -- (0, 2) node [above right] {$E$};
\draw[color=blue, text=black,line width=1] (2,1) node [above right]{$C$} --(0,2);
\end{tikzpicture}}
\end{center}
\caption{Summary of results in terms of number of observations $m$ and cluster size $K$. The lower bound states that it is impossible for any algorithm to reliably recover the clusters exactly in the shaded regime (grey). The combinatorial method, the convex method, the spectral method and the nearest-neighbor clustering algorithm succeed in the regime to the right of lines $AE$ (yellow), $BE$ (red), $CE$ (blue) and $AD$ (green), respectively. }
\label{fig:summary}
\end{figure}
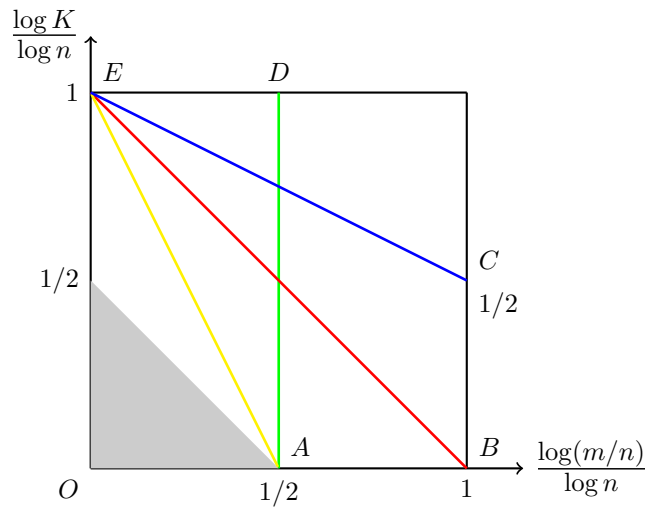

From Figure~\ref{fig:summary}, we can observe interesting trade-offs between algorithmic runtime and statistical performance. In terms of the runtime, the combinatorial method is exponential, while the other three algorithms are polynomial. In particular, the convex method can be casted as a semidefinite programming and solved in polynomial-time. For the spectral method, the most computationally expensive step is the singular value decomposition of the observed data matrix which can always be done in time $O(n^3)$ and more efficiently when the observed data matrix is sparse. It is not hard to see that the time complexity for the nearest-neighbor clustering algorithm is $O(n^2r)$ and more careful analysis reveals that its time complexity is $O(mr)$. On the other hand, in terms of statistical performance, the combinatorial method needs strictly fewer observations than the other three algorithms when there is no noise, and the convex method  always needs fewer observations than the spectral method. It is somewhat surprising to see that the simple nearest-neighbor clustering algorithm needs fewer observations than the more sophisticated convex method when the cluster size $K$ is $O(\sqrt{n})$.

In summary, we see that when more observations available, one can apply algorithms with less runtime while still achieving exact cluster recovery. For example, consider the noiseless case with cluster size $K=n^{0.8}$, the number of observations {\em per user} required for cluster recovery by the combinatorial method, convex method, spectral method and nearest-neighbor clustering algorithm are $\Omega(n^{0.1})$, $\Omega(n^{0.2})$, $\Omega(n^{0.4})$ and $\Omega(n^{0.5})$, respectively. Therefore, when the number of observations per user increases from $\Omega(n^{0.1})$ to $\Omega(n^{0.5})$, one can gradually reduces the computational complexity from exponential-time to polynomial-time as low as $O(n^{1.7})$.


The main results in this paper can be easily extended to the more
general case with $n_1$ rows and $n_2=\Theta(n_1)$ columns and $r_1$ row
clusters and $r_2=\Theta(r_1)$ column clusters. The sizes of different
clusters could vary as long as they are of the same
order. Likewise, the flipping probability $p$ and the erasure probability
$\epsilon$ could also vary for different entries of the data matrix
as long as they are of the same order.
Due to space constraints, such generalizations are omitted in this paper.

\subsection{Notations}
A variety of norms on matrices will be used. The spectral norm of a matrix $X$ is denoted by $\|X\|$, which is equal to the largest singular value. Let $\langle X, Y \rangle= \text{Tr}(X^\top Y) $ denote the inner product between two matrices. The nuclear norm is denoted by $\|X\|_\ast$ which is equal to the sum of singular values and is a convex function of $X$. Let $\|X\|_1=\sum_{i,j}|X_{ij}|$ denote the $l_1$ norm and $\|X\|_\infty=\max_{i,j} |X_{ij}|$ denote the $l_\infty$ norm. Let $X=\sum_{t=1}^n \sigma_t u_t v_t^\top$ denotes the singular value decomposition $X \in \mathbb{R}^{n \times n}$ such that $\sigma_1 \ge \cdots \ge \sigma_n$. The best rank $r$ approximation of $X$ is defined as ${P}_r(X)=\sum_{t=1}^r \sigma_t u_t v_t^\top$. For vectors, let $\langle x, y \rangle$ denote the inner product between two vectors and the only norm that will be used is the usual $l_2$ norm, denoted as $\|x\|_2$.

Throughout the paper, we say that an event occurs ``a.a.s.'' or ``asymptotically almost surely'' when it occurs with a probability which tends to one as $n$ goes to infinity.

\section{Lower Bound} \label{Sec:LowerBound}
In this section, we derive a lower bound for any algorithm to
reliably recover the user and movie clusters. The lower bound is constructed by considering a genie-aided scenario
where the set of flipped entries is revealed as side information, which is equivalent to saying that we are in the noiseless
setting with $p = 0$. Hence, the true rating matrix $R$ agrees with $\widehat{R}$ on all non-erased entries. We construct another
rating matrix $\tilde{R}$ with the same movie cluster structure as $R$ but different user cluster structure by swapping two users in
two different user clusters. We show that if $nK^2(1-\epsilon)^2=O(1)$, then $\tilde{R}$ agrees with $\widehat{R}$ on all non-erased entries with positive probability, which implies that no algorithm can reliably distinguish between $R$ and $\widehat{R}$ and thus recover user clusters.
\begin{theorem} \label{ThmLowerbound}
Fix $0< \delta<1$. If $nK^2 (1-\epsilon)^2 <\delta$, then with probability at least $1-\delta$, it is impossible for any algorithms to  recover the user clusters or movie clusters.
\end{theorem}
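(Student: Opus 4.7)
My plan is to follow the genie-aided sketch preceding the theorem. First, reduce to the noiseless case $p=0$ by revealing the flip pattern to the algorithm, so that on the event $(i,j)\in\Omega$ we have $\widehat R_{ij}=R_{ij}$ exactly. Pick two users $i,i'$ lying in distinct clusters $k,k'$, let $\tilde\sigma$ be the user clustering obtained by swapping the labels of $i$ and $i'$, and keep the movie clustering unchanged. It suffices to show that, with probability at least $1-\delta$, there exists some block rating matrix $\tilde B$ so that the rating matrix $\tilde R$ generated by $(\tilde\sigma,\tilde B)$ agrees with $\widehat R$ on $\Omega$; on that event $(\sigma,B)$ and $(\tilde\sigma,\tilde B)$ produce identical observations, so no algorithm can recover the correct user clusters.

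The crucial step is to note that $\tilde B$ can be chosen freely on any block that no user in the new clustering touches. In the new cluster $k$, which consists of $k\setminus\{i\}$ together with $i'$, the observations pin $\tilde B_{k,l}$ to $B_{k,l}$ whenever some user in $k\setminus\{i\}$ observes a movie in cluster $l$, and to $B_{k',l}$ whenever $i'$ observes such a movie; a conflict arises at $(k,l)$ iff both these ``observation'' events occur and $B_{k,l}\neq B_{k',l}$. The analogous description holds at $(k',l)$ with the roles of $\{i,i'\}$ and $\{k,k'\}$ reversed. Conditional on $B$, the two events at $(k,l)$ and $(k',l)$ are independent (disjoint sets of rows), and events across different $l$ are independent (disjoint sets of columns). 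A single-block conflict has conditional probability $\beta := (1-\epsilon^{K(K-1)})(1-\epsilon^K)$ when $B_{k,l}\neq B_{k',l}$ and zero otherwise.

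Setting $N_{k,k'}=|\{l: B_{k,l}\neq B_{k',l}\}|$, the probability of no conflict at any block, conditional on $B$, equals $(1-\beta)^{2N_{k,k'}}$. Since $N_{k,k'}\sim\text{Bin}(r,1/2)$, averaging over $B$ yields $\bigl((1+(1-\beta)^2)/2\bigr)^r \ge (1-\beta)^r \ge 1-r\beta$ by elementary inequalities. Using $1-\epsilon^m \le m(1-\epsilon)$ twice gives $r\beta \le rK(K-1)(1-\epsilon)\cdot K(1-\epsilon) \le rK^3(1-\epsilon)^2 = nK^2(1-\epsilon)^2 < \delta$, so a consistent $\tilde B$ exists with probability at least $1-\delta$; the movie-cluster statement follows by the symmetric argument with rows and columns exchanged.

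The main obstacle, and the source of the $K^2$ factor in the theorem, is realizing that $\tilde B$ may disagree with $B$ on blocks that no user observes under $\tilde\sigma$. A naive analysis that insists on $\tilde B=B$ reduces consistency to ``every differing entry in rows $i,i'$ is erased'', which has probability $\epsilon^{2KN_{k,k'}}$ and only yields the much weaker regime $n(1-\epsilon)=O(1)$. The refinement converts a per-entry erasure probability $\epsilon$ into a per-block erasure probability $\epsilon^{K(K-1)}$, which is precisely what produces the factor $K^2$ rather than $1$ in the lower bound.
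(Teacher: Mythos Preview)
Your proposal is correct and follows essentially the same route as the paper: reduce to $p=0$ by revealing the flips, swap two users in distinct clusters, and show that a block rating matrix $\tilde B$ consistent with the swapped clustering exists unless some movie cluster $l$ is ``conflicting'' (i.e., $B_{k,l}\neq B_{k',l}$, the migrated user observes a rating in $l$, and the remaining $K-1$ users in its new cluster also observe a rating in $l$). The paper bounds the conflict probability by $\tfrac12(1-\epsilon^{K^2})(1-\epsilon^K)$ and applies a union bound over the $r$ movie clusters and the two swapped clusters to get $rK^3(1-\epsilon)^2=nK^2(1-\epsilon)^2<\delta$; your computation of the exact no-conflict probability $\bigl((1+(1-\beta)^2)/2\bigr)^r\ge 1-r\beta$ is a mild refinement that lands on the same bound.
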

Intuitively, Theorem \ref{ThmLowerbound} says that when the erasure probability is high and the cluster size is small that $nK^2(1-\epsilon)^2=O(1)$, the observed rating matrix $\widehat{R}$ does not carry enough information to distinguish between different possible cluster structures.

\section{Combinatorial Method} \label{Sec:CombinatorialMethod}
In this section, we study a combinatorial method which clusters users or movies by searching for a partition with the least total number of ``disagreements''. We describe the method in Algorithm \ref{alg:combinatorialmethod} for clustering users only. Movies are clustered similarly. The number of disagreements $D_{ii'}$ between a pair of users $i,i'$ is defined as the number of movies satisfying that: The two ratings given by users $i,i'$ are both observed and the observed two ratings are different. In particular, if for every movie, the two ratings given by users $i, i'$ are not observed simultaneously, then $D_{ii'}=0$.

\begin{algorithm}
\caption{Combinatorial Method}\label{alg:combinatorialmethod}
\begin{algorithmic}[1]
\STATE For each pair of users $i,i'$, compute the number of disagreements $D_{ii'}$ between them.
\STATE For each partition of users into $r$ clusters of equal size $K$, compute its total number of disagreements defined as
$$\sum_{i, i' \text{ in the same cluster}}D_{ii'} $$
\STATE Output a partition which has the least total number of disagreements.
\end{algorithmic}
\end{algorithm}

The idea of Algorithm~\ref{alg:combinatorialmethod} is to reduce the problem of clustering both users and movies to a standard user clustering problem without movie cluster structure. In fact, this algorithm looks for the optimal partition of the users which has the minimum total in-cluster distance, where the distance between two users is measured by the number of disagreements between them. The following theorem shows that such simple reduction does {\em not} achieve the lower bound given in Theorem~\ref{ThmLowerbound}. The optimal algorithm for our cluster recovery problem might need to explicitly make use of both user and movie cluster structures.

\begin{theorem}\label{result:combinatorial_lowerbound}
If $nK(1-\epsilon)^2 \le \frac{1}{4}$, then with probability at least $3/4$, Algorithm \ref{alg:combinatorialmethod} cannot recover user and movie clusters.
\end{theorem}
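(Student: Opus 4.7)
The plan is to exhibit, with probability at least $3/4$, a partition $\tilde P$ obtained from the true partition $P$ by a single cross-cluster swap of two users, such that the total disagreement $T(\tilde P)$ is no larger than $T(P)$. Since Algorithm~\ref{alg:combinatorialmethod} returns a partition minimizing the total number of disagreements, the existence of such a $\tilde P \ne P$ forces the algorithm to fail to uniquely recover $P$. I work in the noiseless regime $p = 0$, consistent with the rest of Section~\ref{Sec:CombinatorialMethod}.

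First, I would fix arbitrary distinct user clusters $\mathcal C_a \neq \mathcal C_b$ and arbitrary users $i \in \mathcal C_a$, $i' \in \mathcal C_b$, and let $\tilde P$ be obtained from $P$ by swapping $i$ and $i'$. In the noiseless setting, any two users in the same $P$-cluster agree on every commonly observed entry, so $D_{jj'} = 0$ for such pairs; expanding the objective then leaves
\begin{equation*}
N \,:=\, T(\tilde P) - T(P) \,=\, \sum_{j \in \mathcal C_a \setminus \{i\}} D_{j i'} \,+\, \sum_{j \in \mathcal C_b \setminus \{i'\}} D_{j i}.
\end{equation*}
This is a nonnegative integer-valued random variable, and $T(\tilde P) \le T(P)$ holds exactly when $N = 0$.

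The key step is a one-line first-moment bound. Each $D_{ji'}$ counts columns $m$ co-observed by $j$ and $i'$ (probability $(1-\epsilon)^2$ per column) whose movie-cluster label $\ell(m)$ satisfies $B_{a\ell(m)} \neq B_{b\ell(m)}$. Since the entries of $B$ are independent uniform $\pm 1$, the latter event has probability exactly $\tfrac{1}{2}$, so $\E{D_{ji'}} = n(1-\epsilon)^2 / 2$ by linearity of expectation. Summing the $2(K-1)$ such terms in $N$ yields
\begin{equation*}
\E{N} \,=\, (K-1)\, n (1-\epsilon)^2 \,\le\, nK(1-\epsilon)^2 \,\le\, \tfrac{1}{4}
\end{equation*}
by the hypothesis. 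Markov's inequality then gives $\Pr[N \ge 1] \le \E{N} \le \tfrac{1}{4}$, hence $\Pr[N = 0] \ge \tfrac{3}{4}$. On this event $\tilde P$ is also an optimum of the combinatorial search and, whenever $K \ge 2$, it differs from $P$, so the algorithm cannot be guaranteed to output $P$; the same argument applied to movie swaps handles the symmetric statement for movie clusters. There is no serious obstacle: the hypothesis $nK(1-\epsilon)^2 \le \tfrac{1}{4}$ is tailored precisely to cancel the $\tfrac{1}{2}$ factor coming from the distinguishing-column probability, so the $\tfrac{3}{4}$ threshold falls out of Markov's inequality without any concentration or second-moment machinery.
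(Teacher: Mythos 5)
Your proposal is correct and takes essentially the same route as the paper's proof: swap a single pair of users between two true clusters and use a first-moment (Markov) bound to show the swapped partition also attains zero total disagreement with probability at least $3/4$, so Algorithm~\ref{alg:combinatorialmethod} cannot distinguish it from the truth. The only differences are cosmetic --- you bound the combined count $N$ with one application of Markov where the paper bounds each of the two modified clusters by $7/8$ and takes a union bound, and you assume $p=0$ outright where the paper justifies the same reduction via a genie revealing the flipped entries.
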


Next we show that the above necessary condition for the combinatorial method is also sufficient up to a logarithmic factor when there is no noise, i.e., $p = 0$.
We suspect that the theorem holds for the noisy setting as well, but we have not yet been able to prove this.

\begin{theorem}\label{result:noiseless_clustering}
If $p=0$ and $ nK(1-\epsilon)^2 >C \log n$ for some constant $C$, then a.a.s. Algorithm \ref{alg:combinatorialmethod} exactly recovers user and movie clusters.
\end{theorem}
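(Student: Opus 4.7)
The plan is to show that, with probability $1-o(1)$, the true partition $\pi^*$ is the unique minimizer of Algorithm~\ref{alg:combinatorialmethod}'s objective on the user side; the transposed argument handles movies and a final union bound combines the two.

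\textbf{Reduction.} Since $p=0$, any two users in the same true cluster agree on every observed rating, so $D_{ii'}=0$ whenever $\pi^*(i)=\pi^*(i')$, and $\pi^*$ has total cost $0$. A competing equal-size partition $\pi$ ties iff $D_{ii'}=0$ for every pair with $\pi(i)=\pi(i')$ but $\pi^*(i)\neq\pi^*(i')$. A preliminary step shows the $r$ block-rows of $B$ are pairwise distinct a.a.s., which follows from $\binom{r}{2}2^{-r}=o(1)$ for $r=\Omega(\log n)$ with a sufficiently large constant.

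\textbf{Per-pair and per-swap probability.} Set $q=(1-\epsilon)^2$. For a fixed cross-cluster pair $(i,i')$ with $i$ in true cluster $a$ and $i'$ in true cluster $b$, condition on $|S|=s$, where $S=\{\ell:B_{a\ell}\neq B_{b\ell}\}\sim\text{Bin}(r,1/2)$: the $sK$ ``distinguishing'' movies are observed by both users with probability $q$ each, independently, so
\[
\Pr[D_{ii'}=0]=\left(\tfrac{1+(1-q)^K}{2}\right)^r.
\]
For the simplest wrong partition, a single swap of $i\in a$ with $j\in b$, the cost stays at $0$ iff $D_{ji'}=0$ for every $i'\in a\setminus\{i\}$ and $D_{ij'}=0$ for every $j'\in b\setminus\{j\}$. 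Conditioning on $|S|$, on the block matrix $B$, and on the observation patterns of the two displaced users $i$ and $j$ makes the remaining $2(K-1)$ events independent across the other users, and a direct computation (MGF of a binomial) yields
\[
\Pr[\text{swap ties}]=\left(\tfrac{1+\alpha^{2K}}{2}\right)^r,\qquad \alpha=\epsilon+(1-\epsilon)\epsilon^{K-1}.
\]

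\textbf{Union bound and generalization.} Using $(1-q)^K\le e^{-qK}$ and $\alpha^{2K}\le e^{-2K(1-\epsilon)(1-\epsilon^{K-1})}$ and splitting into the regimes $qK\lesssim 1$ and $qK\gtrsim 1$, both bounds above are at most $\exp(-c\,nKq)$ for an absolute constant $c>0$ under the hypothesis $nKq=rK^2q>C\log n$. A union bound over the $\binom{n}{2}$ single swaps therefore succeeds when $C$ is large. For wrong partitions at distance $t\ge 2$ from $\pi^*$ (after optimal cluster relabeling), the same conditioning strategy shows the per-partition probability is at most essentially the single-swap bound raised to $\Theta(t)$, while the number of such partitions is at most $n^{O(t)}$; the combined bound is then summable in $t$ under the same hypothesis.

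The main technical obstacle is the multi-swap step: the $D_{ii'}$-events are correlated through shared observation patterns of the displaced users, so the conditioning must be carried out carefully in the right order to recover independence across ``bad'' movies without inflating the combinatorial count. Once this is established, the symmetric argument applied to the transpose recovers the movie clusters in the same parameter regime, and the theorem follows a.a.s.
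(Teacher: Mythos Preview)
Your single-swap analysis is correct, but the proposal leaves the decisive multi-swap step as an informal claim, and that is precisely where the work lies. The assertion that the per-partition probability at distance $t$ is ``essentially the single-swap bound raised to $\Theta(t)$'' is plausible but not established: when several displaced users land in the same cluster, or when a displaced user lands in a cluster already containing other displaced users, the conditioning on observation patterns no longer reduces to a product of independent single-swap events, and the combinatorics of which pairs must have $D_{ii'}=0$ changes with the composition of each wrong cluster. You flag this as the main obstacle, but no concrete mechanism is offered to handle it.

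The paper sidesteps this difficulty with a cleaner reduction. Instead of union-bounding over wrong partitions, it observes that a wrong partition has cost zero only if one of its $K$-clusters is a \emph{bad clique}: a $K$-subset drawn from more than one true cluster with all pairwise disagreements zero. Hence it suffices to show no bad $K$-clique exists, a union over $K$-subsets rather than over partitions. For a fixed $K$-subset with $n_k$ users from true cluster $k$, the paper first conditions on the block matrix $B$ (via the a.a.s.\ event $|S_{k_1}\Delta S_{k_2}|\ge r/4$ for all pairs, rather than averaging over $|S|$ as you do) and then factors the zero-disagreement event over \emph{movies}: for each movie $j$, the event $E_j$ that all observed ratings among the $K$ users agree is independent across $j$, and $\Pr[E_j]$ is bounded explicitly in terms of the $n_k$'s. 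This yields the uniform bound $p_{n_1\cdots n_t}\le\exp\bigl(-C_1 n(1-\epsilon)^2\sum_k n_k(K-n_k)\bigr)$, which already encodes the ``distance'' of the subset from a true cluster and makes the final union over compositions a short calculation. Your user-wise conditioning can probably be made to work, but the paper's movie-wise factorization together with the bad-clique reduction is simpler and avoids exactly the dependency issues you identify.
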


This theorem is proved by considering a conceptually simpler greedy algorithm that does not need to know $K$. After computing the number of disagreements for every pair of users, we search for a largest set of users which have no disagreement between each other, and assign them to a new cluster. We then remove these users and repeat the searching process until there is no user left. In the noiseless setting, the $K$ users from the same true cluster have no disagreement between each other. Therefore, it is sufficient to show that, for any set of $K$ users consisting of users from more than one cluster, they have more than one disagreement with high probability under our assumption.
%

\section{Convex Method} \label{Sec:ConvexMethod}
In this section, we show that the rating matrix $R$ can be exactly recovered by a convex program, which is a relaxation of the maximum likelihood (ML) estimation. When $R$ is known, we immediately get the user (or movie) clusters by assigning the identical rows (or columns) of $R$ to the same cluster.

Let $\mathcal{Y}$ denote the set of binary block-constant rating matrix with $r^2$ blocks of equal size. As the flipping probability $p<1/2$, Maximum Likelihood (ML) estimation of $R$ is equivalent to finding a $Y \in \mathcal{Y}$ which best matches the observation $\widehat{R}$:
\begin{align}
\max_{Y}  & \; \sum_{i,j} \widehat{R}_{ij} Y_{ij} \nonumber \\
\text{s.t.	} & \; Y \in \mathcal{Y}. \label{MLE}
\end{align}
Since $|\mathcal{Y}|= \Omega(e^{n})$, solving~\eqref{MLE} via exhaustive search takes exponential-time. Observe that $Y \in \mathcal{Y}$ implies that $Y$ is of rank at most $r$. Therefore, a natural relaxation of the constraint that $Y \in \mathcal{Y}$ is to replace it with a rank constraint on $Y$, which gives the following problem:
\begin{align*}
\max_{Y}  & \;\sum_{i,j} \widehat{R}_{ij} Y_{ij} \nonumber \\
\text{s.t.	} & \; \text{rank} (Y) \leq r,  \nonumber
 \;Y_{ij} \in \{1, -1 \}. 
\end{align*}
Further by relaxing the integer constraint and replacing the rank constraint with the nuclear norm regularization, which is a standard technique for low-rank matrix completion, we get the desired convex program:
\begin{align}
\max_{Y}  & \;\sum_{i,j} \widehat{R}_{ij} Y_{ij} - \lambda \| Y \|_\ast \nonumber \\
\text{s.t.	}  & \; Y_{ij}  \in [-1,1].  \label{LowRank}
\end{align}
The clustering algorithm based on the above convex program is given in Algorithm \ref{alg:convexmethod}
\begin{algorithm}
\caption{Convex Method}\label{alg:convexmethod}
\begin{algorithmic}[1]
\STATE (Rating matrix estimation) Solve for $\widehat{Y}$ the convex program~\eqref{LowRank}.
\STATE (Cluster estimation) Assign identical rows (columns) of $\widehat{Y}$ to the same cluster.
\end{algorithmic}
\end{algorithm}

The convex program~\eqref{LowRank} can be casted as a semidefinite program and solved in polynomial-time. Thus, Algorithm \ref{alg:convexmethod} takes polynomial-time. Our performance guarantee for Algorithm \ref{alg:convexmethod} is stated in terms of the incoherence parameter $\mu$ defined below. Since the rating matrix $R$ has rank $r$, the singular vector decomposition (SVD) is $R = U\Sigma V^\top $, where $U, V\in\mathbb{R}^{n\times r}$ are matrices with orthonormal columns and $\Sigma\in\mathbb{R}^{r\times r}$ is a diagonal matrix with nonnegative entries.
Define $\mu>0$ such that $\| UV^\top \|_{\infty} \leq \mu \sqrt{r}/n$.

Denote the SVD of the block rating matrix $B$ by $B = U_B\Sigma_B V_B^\top$. The next lemma shows that
 \begin{align}
 \|UV^\top\|_{\infty} = \|U_BV_B^\top\|_{\infty}/K, \label{Eq:Incoherence}
 \end{align}
and thus $\mu$ is upper bounded by $\sqrt{r}$.
\begin{lemma}\label{result:incoherence_loose}
  $\mu\leq \sqrt{r}$.
\end{lemma}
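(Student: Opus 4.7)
The plan is to relate the SVD of the rating matrix $R$ to the SVD of the block rating matrix $B$ via cluster-indicator matrices, and then to control the infinity norm through orthogonality. First I would introduce $\Phi \in \{0,1\}^{n \times r}$ with $\Phi_{ik} = 1$ iff user $i$ lies in cluster $k$, and analogously $\Psi$ for movies. Since each block of $R$ is constant with value $B_{kl}$, the exact factorization $R = \Phi B \Psi^\top$ holds, and the equal cluster size gives $\Phi^\top \Phi = \Psi^\top \Psi = K I_r$, so $\Phi/\sqrt{K}$ and $\Psi/\sqrt{K}$ have orthonormal columns.

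Next I would substitute $B = U_B \Sigma_B V_B^\top$ to get
$$R \;=\; \bigl(\Phi U_B / \sqrt{K}\bigr)\,\bigl(K\Sigma_B\bigr)\,\bigl(\Psi V_B / \sqrt{K}\bigr)^\top.$$
This is a valid SVD of $R$, since $(\Phi U_B/\sqrt{K})^\top(\Phi U_B/\sqrt{K}) = U_B^\top U_B = I$ and likewise on the right. Therefore I can identify $U = \Phi U_B/\sqrt{K}$ and $V = \Psi V_B/\sqrt{K}$, which yields $UV^\top = \tfrac{1}{K}\Phi (U_B V_B^\top)\Psi^\top$. Because $\Phi$ and $\Psi$ are cluster indicators, for any $r \times r$ matrix $X$ the product $\Phi X \Psi^\top$ is block-constant with $(k,l)$-block equal to $X_{kl}$ times the $K \times K$ all-ones matrix. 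Applying this to $X = U_B V_B^\top/K$ yields the identity $\|UV^\top\|_\infty = \|U_B V_B^\top\|_\infty / K$, which is equation~\eqref{Eq:Incoherence}.

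It remains to bound $\|U_B V_B^\top\|_\infty \le 1$. Writing $(U_B V_B^\top)_{kl} = \langle u_k, v_l\rangle$, where $u_k$ and $v_l$ are the $k$-th and $l$-th rows of $U_B$ and $V_B$, Cauchy--Schwarz gives $|(U_B V_B^\top)_{kl}| \le \|u_k\|_2\,\|v_l\|_2$. Since $U_B$ has orthonormal columns, $\|u_k\|_2^2$ is the $(k,k)$ diagonal entry of the orthogonal projection $U_B U_B^\top$ and hence lies in $[0,1]$; the same holds for $v_l$. Thus each entry of $U_B V_B^\top$ has magnitude at most $1$. Combining with the identity gives $\|UV^\top\|_\infty \le 1/K = r/n = \sqrt{r}\cdot\sqrt{r}/n$, so $\mu \le \sqrt{r}$ by definition.

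There are no serious obstacles; the proof is purely structural. The only subtlety I would be careful about is allowing the degenerate case where $B$ is rank-deficient, in which $U_B,V_B$ have fewer than $r$ columns; the projection-diagonal argument still gives row norms bounded by $1$, so the bound is unchanged.
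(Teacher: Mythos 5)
Your proof is correct and follows essentially the same route as the paper: it establishes the identity $\|UV^\top\|_\infty = \|U_BV_B^\top\|_\infty/K$ from the factorization $R = \Phi B \Psi^\top$ (the paper writes this as $R = KU_CBV_C^\top$ with $U = U_CU_B$, $V = V_CV_B$) and then bounds $\|U_BV_B^\top\|_\infty \le 1$ by Cauchy--Schwarz. You simply spell out the details that the paper compresses into one line, including the useful observation that rows of a matrix with orthonormal columns have norm at most $1$.
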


The following theorem provides a sufficient condition under which Algorithm \ref{alg:convexmethod} exactly recovers the rating matrix and thus the row and column clusters as well.
\begin{theorem}\label{result:convex_relaxation}
  If $n(1-\epsilon)\geq C'\log^2 n$ for some constant $C'$, and
  \begin{align}
    m >C nr  \max\{\log n, \mu^2\}, \label{EqCondition}
  \end{align}
  where $C$ is a constant and $\mu$ is the incoherence parameter for $R$, then a.a.s. the rating matrix $R$ is the unique maximizer to the convex program \eqref{LowRank} with $\lambda=3 \sqrt{(1-\epsilon)n}$.
\end{theorem}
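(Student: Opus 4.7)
The plan is to apply the standard dual certificate technique for nuclear norm regularized problems. First I would write down the KKT/subdifferential optimality conditions for the convex program \eqref{LowRank}. Because every entry of $R$ is $\pm 1$ and therefore sits on the boundary of $[-1,1]$, optimality of $R$ reduces to the existence of a matrix $W$ satisfying the tangent-space conditions $U^\top W = 0$, $WV = 0$, $\|W\| \le 1$, together with the pointwise sign condition $R_{ij}\bigl(\widehat{R}_{ij} - \lambda(UV^\top + W)_{ij}\bigr) \ge 0$ for every $(i,j)$; uniqueness will further require $\|W\| < 1$ strictly and this pointwise slack to be strictly positive.

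The candidate I would propose is $W = \lambda^{-1} P_{T^\perp}(\widehat{R})$, where $T$ denotes the tangent space to the manifold of rank-$r$ matrices at $R$. Writing $\widehat{R} = \alpha R + S$ with $\alpha = (1-\epsilon)(1-2p)$ and $S = \widehat{R} - \mathbb{E}[\widehat{R}]$ a zero-mean matrix of independent, bounded entries of per-entry variance $\Theta(1-\epsilon)$, the mean part $\alpha R$ lies in $T$, so $\lambda W = P_{T^\perp}(S)$. The spectral-norm check $\|W\|<1$ then reduces to bounding $\|S\|$ by a matrix Bernstein (or Bandeira--van Handel-type) inequality for random matrices with independent bounded entries, yielding $\|S\| = O(\sqrt{n(1-\epsilon)})$ with high probability once $n(1-\epsilon) \gtrsim \log^2 n$; with $\lambda = 3\sqrt{(1-\epsilon)n}$ this gives $\|W\| \le 1/3$.

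Next I would verify the entrywise slack condition. With the above $W$ the slack matrix becomes $d := \widehat{R} - \lambda UV^\top - \lambda W = P_T(\widehat{R}) - \lambda UV^\top = \alpha R + P_T(S) - \lambda UV^\top$, so the required condition reads $\alpha + R_{ij}P_T(S)_{ij} \ge \lambda R_{ij}(UV^\top)_{ij}$ entrywise. The deterministic term satisfies $|\lambda(UV^\top)_{ij}| \le \lambda\mu\sqrt{r}/n$, which is $\lesssim \alpha$ precisely when $n(1-\epsilon) \gtrsim r\mu^2$, i.e.\ $m \gtrsim nr\mu^2$. For the stochastic fluctuation I would use the decomposition $P_T(S) = UU^\top S + SVV^\top - UU^\top SVV^\top$ and observe that $UU^\top$ and $VV^\top$ act as block averages over user and movie clusters respectively (Lemma~\ref{result:incoherence_loose} makes this explicit); consequently each entry of $P_T(S)$ is a sum of $\Theta(K)$ independent bounded mean-zero variables of total variance $O((1-\epsilon)/K)$. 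Bernstein combined with a union bound over $n^2$ entries then gives $\|P_T(S)\|_\infty = O\bigl(\sqrt{(1-\epsilon)\log n/K}\bigr)$, which is $\lesssim \alpha$ exactly when $K(1-\epsilon) \gtrsim \log n$, i.e.\ $m \gtrsim nr\log n$. The two regimes combine into the stated hypothesis $m > Cnr\max\{\log n,\mu^2\}$.

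Uniqueness then follows from these strict inequalities by a standard argument. For any feasible perturbation $H$ with $R+H$ also optimal, the box constraint forces $R_{ij}H_{ij} \le 0$, and since $d$ shares the entrywise sign of $R$, $d_{ij}H_{ij} \le 0$ pointwise; subdifferential optimality of $R+H$ forces $\sum_{i,j} d_{ij}H_{ij} \ge 0$, so $d_{ij}H_{ij} = 0$ everywhere. Along the same lines, combining the nuclear-norm subgradient inequality with $\langle W, H\rangle = \langle W, P_{T^\perp}(H)\rangle \le \|W\|\,\|P_{T^\perp}(H)\|_\ast$ gives $\|P_{T^\perp}(H)\|_\ast \le \|W\|\,\|P_{T^\perp}(H)\|_\ast$, which with $\|W\| < 1$ yields $H \in T$; since $d$ is strictly nonzero at every entry, $H \equiv 0$. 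The main obstacle I expect is the sharp $\ell_\infty$ control of $P_T(S)$: a generic bound such as $\|P_T\|_{\infty\to\infty}\|S\|_\infty$ is too lossy, so one really has to exploit the block-averaging structure of $P_T$ together with a careful entrywise Bernstein estimate to land exactly on the $m \gtrsim nr\log n$ threshold, while keeping all constants compatible with the fixed choice $\lambda = 3\sqrt{(1-\epsilon)n}$ that simultaneously yields $\|W\| < 1$.
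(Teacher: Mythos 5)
Your proposal is correct and follows essentially the same route as the paper: the certificate $UV^\top + \lambda^{-1}\mathcal{P}_{T^\perp}(\widehat{R}-\bar{R})$, the spectral-norm concentration of $\widehat{R}-\bar{R}$ to get $\|W\|\le 1$ with $\lambda = 3\sqrt{(1-\epsilon)n}$, and the entrywise Bernstein bound on $\mathcal{P}_T(W)$ exploiting the block-averaging structure of $U_CU_C^\top$ and $V_CV_C^\top$ are exactly the ingredients used there, and they produce the same two thresholds $m\gtrsim nr\mu^2$ and $m\gtrsim nr\log n$. The only cosmetic difference is packaging: the paper folds optimality and uniqueness into a single direct lower bound $\Delta(Y)\ge[(1-\epsilon)(1-2p)-\lambda(\mu\sqrt{r}/n+\|\mathcal{P}_T(W)\|_\infty)]\|R-Y\|_1>0$ rather than stating KKT conditions and a separate perturbation argument.
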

 Note that Algorithm \ref{alg:convexmethod} is easy to implement as $\lambda$ only depends on the erasure probability $\epsilon$, which can be reliably estimated from $\widehat{R}$. Moreover, the particular choice of $\lambda$ in the theorem is just to simplify notations. It is straightforward to generalize our proof to show that the above theorem holds with $\lambda=C_1\sqrt{(1-\epsilon)n}$ for any constant $C_1\ge 3$.

Using Lemma~\ref{result:incoherence_loose}, we immediately conclude from the above theorem that the convex program succeeds when $m>Cn r^2$ for some constant $C$. However, based on extensive simulation in Fig~\ref{FigIncoherenceNew}, we conjecture that the following result is true.
\begin{conjecture} \label{result:conjecture}
$\mu=\Theta(\sqrt{\log r})$ a.a.s.
\end{conjecture}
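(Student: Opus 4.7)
The plan is to translate the conjecture into a precise statement about the polar factor of a random sign matrix and then prove matching upper and lower bounds. Combining the definition of $\mu$ with \eqref{Eq:Incoherence} and the identity $K=n/r$ gives $\mu=\sqrt{r}\,\|U_BV_B^\top\|_\infty$. Writing $Q:=U_BV_B^\top=B(B^\top B)^{-1/2}$ for the orthogonal factor in the polar decomposition of the $r\times r$ random $\pm 1$ matrix $B$, the conjecture becomes
\begin{equation*}
\|Q\|_\infty \;=\; \Theta\!\left(\sqrt{(\log r)/r}\right) \text{ a.a.s.}
\end{equation*}
The guiding heuristic is that the map $B\mapsto Q$ should be ``generic enough'' that $Q$ behaves statistically like a Haar-random $r\times r$ orthogonal matrix, whose entries are approximately $\mathcal N(0,1/r)$ and whose $\ell_\infty$ norm is $\Theta(\sqrt{(\log r)/r})$ by a max-of-sub-Gaussians bound.

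For the upper bound I would start from the integral representation
\begin{equation*}
Q_{ij} \;=\; \frac{1}{\pi}\int_0^\infty \bigl[B(B^\top B + \lambda I)^{-1}\bigr]_{ij}\,\frac{d\lambda}{\sqrt{\lambda}},
\end{equation*}
and split the integral according to where $\lambda$ sits relative to the spectrum of $B^\top B$. In the bulk regime $r^{-1}\lesssim\lambda\lesssim r$, the local Marchenko-Pastur law replaces $(B^\top B+\lambda I)^{-1}$ by the deterministic multiple $m(\lambda)I$ up to entry-wise errors of order $r^{-1/2}$, where $m(\lambda)$ is the Stieltjes transform of the Marchenko-Pastur law with aspect ratio $1$. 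A Hanson-Wright-type concentration inequality applied to the resulting sum over the entries of $B$ bounds the integrand by a sub-Gaussian with variance proxy $O(1/r)$ at each $\lambda$, integrating (after Fubini) to a sub-Gaussian bound for $Q_{ij}$ at the scale $1/\sqrt{r}$. The large-$\lambda$ tail is handled by the crude operator-norm bound $\|B(B^\top B+\lambda I)^{-1}\|_{\text{op}}\le\|B\|/\lambda$, and the small-$\lambda$ tail by $\|B(B^\top B+\lambda I)^{-1}\|_{\text{op}}\le\min(1/(2\sqrt{\lambda}),1/\sigma_r(B))$ combined with the Rudelson-Vershynin bound $\sigma_r(B)\gtrsim r^{-1/2}$ a.a.s. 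A union bound over the $r^2$ entries then yields $\|Q\|_\infty=O(\sqrt{(\log r)/r})$.

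For the lower bound I would apply a Paley-Zygmund second-moment argument to the counting statistic $N_t:=|\{(i,j):|Q_{ij}|>t/\sqrt{r}\}|$ at $t=c\sqrt{\log r}$. Showing $\mathbb{E}[N_t]\gtrsim 1$ requires the matching anti-concentration statement that each $Q_{ij}$ has a stochastic component whose variance is $\Omega(1/r)$, which can be extracted from the leading stochastic term of the same resolvent expansion used in the upper bound. Bounding $\mathbb{E}[N_t^2]$ reduces to estimating the covariances of pairs $Q_{ij},Q_{i'j'}$, which are controlled via the local law applied to off-diagonal entries of the resolvent.

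The hard part will be the small-$\lambda$ piece of the integral, corresponding to the hard edge at $0$ of the spectrum of $B^\top B$. For square $B$ the smallest singular value is only $\sigma_r(B)\asymp r^{-1/2}$, so the local law degrades near zero and one needs quantitative least-singular-value estimates for discrete i.i.d.\ matrices in the spirit of Rudelson-Vershynin and Tao-Vu, together with a careful truncation that keeps the extreme-small-$\lambda$ contribution below the $\sqrt{(\log r)/r}$ scale. A secondary obstacle is the anti-concentration required for the lower bound: $Q_{ij}$ is a complicated non-linear function of $B$ and ruling out atypical concentration near zero is generally harder than proving concentration. These two difficulties together are likely the reason the result is stated only as a conjecture supported by simulation.
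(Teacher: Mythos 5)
The statement you were given is not a theorem of the paper but an open conjecture: the authors state explicitly that they have no proof and support it only by the simulation in Fig.~\ref{FigIncoherenceNew}. Your reduction of the claim to $\|U_BV_B^\top\|_\infty=\Theta(\sqrt{(\log r)/r})$ is exactly the equivalent form the paper itself records via \eqref{Eq:Incoherence}, and the heuristic that the polar factor $Q=B(B^\top B)^{-1/2}$ should behave like a Haar orthogonal matrix is the right intuition. But what you have written is a research program whose two decisive steps are left open, so it cannot be accepted as a proof; there is no paper proof to compare it against.

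Concretely, the gaps are these. (i) For the upper bound, the hard-edge contribution is not a tail that your truncation controls. Since $B$ is square, $\sigma_r(B)\asymp r^{-1/2}$ a.a.s., and on the range $\sigma_r(B)^2\le\lambda\le c$ the only estimate you offer is the operator-norm bound $\|B(B^\top B+\lambda I)^{-1}\|\le 1/(2\sqrt{\lambda})$; integrated against $d\lambda/\sqrt{\lambda}$ this contributes $\Theta(\log r)$ to each entry, which exceeds the target scale $\sqrt{(\log r)/r}$ by a factor of order $\sqrt{r\log r}$. Closing this requires an entrywise (isotropic) local law for the resolvent of a square discrete i.i.d.\ matrix valid all the way down to the hard edge, which is not available off the shelf and is precisely where the difficulty of the conjecture lives; note also that Hanson--Wright does not apply as stated, because $(B^\top B+\lambda I)^{-1}$ is itself a function of $B$ rather than a fixed coefficient matrix. (ii) For the lower bound, the bound $\|Q\|_\infty\ge \|Q\|_F/r = r^{-1/2}$ is free from orthogonality, so the entire content is the extra $\sqrt{\log r}$ factor; your Paley--Zygmund argument needs both a variance lower bound $\mathrm{Var}(Q_{ij})=\Omega(1/r)$ and decorrelation across the $\Theta(r^2)$ entries, and neither is derived --- $Q_{ij}$ is a global nonlinear functional of all $r^2$ signs, and anti-concentration for such functionals does not follow from the local law you invoke. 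You correctly diagnose both obstacles in your last paragraph; as it stands the proposal leaves the conjecture unresolved.
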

By~\eqref{Eq:Incoherence}, Conjecture \ref{result:conjecture} is equivalent to $\|U_BV_B^\top\|_{\infty}=\Theta(\sqrt{\frac{\log r}{r} })$. For a fixed $r$, we simulate $1000$ independent trials of $B$, pick the largest value of $\|U_BV_B^\top\|_{\infty}$, scale it by dividing $\sqrt{\log r/r}$, and get the plot in Fig~\ref{FigIncoherenceNew}.
\begin{figure}
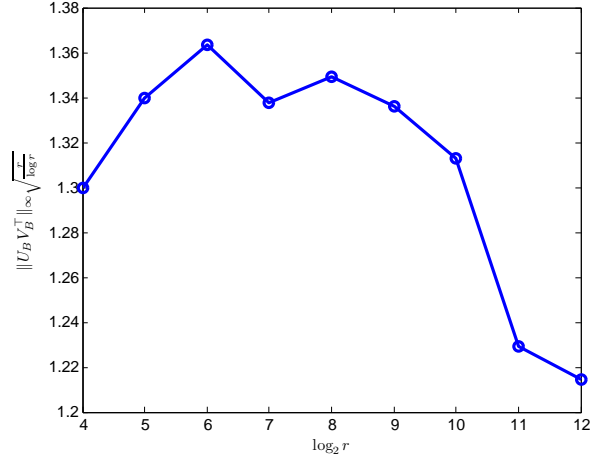

\centering
\post{IncoherenceNew3}{3.2in}
\centering
\caption{Simulation result supporting Conjecture \ref{result:conjecture}. The conjecture is equivalent to $\|U_BV_B^\top\|_{\infty}=\Theta(\sqrt{\frac{\log r}{r} })$.}
\label{FigIncoherenceNew}
\end{figure}

Assuming this conjecture holds, Theorem~\ref{result:convex_relaxation} implies that $$m>Cn r\log n$$ for some constant $C$ is sufficient to recover the rating matrix, which is better than the previous condition by a factor of $r$. We do not have a proof for the conjecture at this time.

{\bf Comparison to previous work} \enskip In the noiseless setting with $p=0$, the nuclear norm minimization approach \cite{Candes10,Candes12,Recht11} can be directly applied to recover data matrix and further recover the row and column clusters. It is shown in \cite{Recht11} that the nuclear norm minimization approach exactly recovers the matrix with high probability if $m=\Omega(\mu^2 nr \log^2 n)$. The performance guarantee for Algorithm \ref{alg:convexmethod} given in \eqref{EqCondition} is better by at least a factor of  $\log n$.  Theorem \ref{result:noiseless_clustering} shows that the combinatorial method exactly recovers the row and column clusters if $m= \Omega(n r^{1/2} \log^{1/2}n)$, which is substantially better than the two previous conditions by at least a factor of $r^{1/2}$. This suggests that a large performance gap might exist between exponential-time algorithms and polynomial-time algorithms. Similar performance gap due to computational complexity constraint has also been observed in other inference problems like Sparse PCA \cite{berthet2012sparsePCA,berthet2013lowerSparsePCA,Vilenchik13} and sparse submatrix detection \cite{Kolar2011,balakrishnan2011,MaWu13}.

In the low noise setting with $p$ restricting to be a small constant, the low-rank plus sparse matrix decomposition approach \cite{Sanghavi11,Candes11,ChenIT13} can be applied to exactly recover data matrix and further recover the row and column clusters. It is shown in \cite{ChenIT13} that a weighted nuclear norm and $l_1$ norm minimization succeeds with high probability if $m=\Omega(\rho_r \mu^2 nr \log^6 n)$ and $p\le \rho_s$  for two constants $\rho_r$ and $\rho_s$. The performance guarantee for Algorithm \ref{alg:convexmethod} given in \eqref{EqCondition} is better by several $\log n$ factors and we allow the fraction of noisy entries $p$ to be any constant less than $1/2$. Moreover, our proof turns out to be much simpler.

\section{Spectral Method} \label{Sec:SpectralMethod}

\begin{algorithm}
\caption{Spectral Method}\label{alg:spectralmethod}
\begin{algorithmic}[1]
\STATE (Producing two
subsets, $\Omega_1$ and $\Omega_2,$ of $\Omega$ via randomly sub-sampling $\Omega$)
Let $\delta=\frac{1-\epsilon}{4},$  and independently assign
each element of $\Omega$ only to $\Omega_1$ with probability $\frac{1}{2}-\delta$,
only to $\Omega_2$ with probability $\frac{1}{2}-\delta$,
to both $\Omega_1$ and $\Omega_2$ with probability $\delta$, and to
neither $\Omega_1$ nor $\Omega_2$ with probability $\delta.$
Let $\widehat{R}^{(1)}_{i,j}=\widehat{R}_{i,j}\1{ (i,j)\in \Omega_1 }
$
and
$\widehat{R}^{(2)}_{i,j}=\widehat{R}_{i,j}\1{(i,j)\in \Omega_2}$
for $i,j \in \{1,\ldots ,n\}.$

\STATE  (Approximate clustering)  Let $P_r(\widehat{R}^{(1)})$ denote the
rank $r$ approximation of $\widehat{R}^{(1)}$ and let $x_i$ denote the
$i$-th row of $P_r(\widehat{R}^{(1)}).$   Construct user clusters
$\widehat{C}_1, \ldots  , \widehat{C}_r$ sequentially as follows.
For $1\leq k \leq r,$  after $\widehat{C}_1,\ldots , \widehat{C}_{k-1}$
have been selected, choose an initial user not in the first $k-1$ clusters, uniformly
at random, and let $\widehat{C}_k= \{ i' : ||x_i - x_{i'}|| \leq \tau \}.$
(The threshold $\tau$ is specified below.)
Assign each remaining unclustered user to a cluster arbitrarily.
Similarly, construct movie clusters $\widehat{D}_1, \ldots , \widehat{D}_r$
based on the columns of $P_r(\widehat{R}^{(1)}).$

\STATE (Block rating estimation by majority voting)  For $k, l \in \{ 1, \ldots , r\},$ let $\widehat{V}_{kl}=\sum_{i \in \widehat{C}_k } \sum_{j \in \widehat{D}_l } \widehat{R}^{(2)}_{ij}$ be the total vote that user cluster $\widehat{C}_k$ gives to movie cluster $\widehat{D}_l$.
If $\widehat{V}_{kl} \ge 0$, let $\widehat{B}_{kl}=1$; otherwise, let $\widehat{B}_{kl}=-1$.

\STATE (Reclustering by assigning users and movies to nearest centers) Recluster users as follows. For $k\in \{1, \dots, r\}$, define
center $\mu_k$ for user cluster $\widehat{C}_k$ as $\mu_{kj}=\widehat{B}_{k l}$ if movie $j\in \widehat{D}_l$ for all $j$. Assign user $i$ to cluster $k$ if $\langle \widehat{R}^{(2)}_{i,\cdot}, \mu_k \rangle \ge
\langle \widehat{R}^{(2)}_{i,\cdot}, \mu_{k'} \rangle$ for all $k' \neq k$. Recluster movies similarly.
\end{algorithmic}
\end{algorithm}

In this section, we study a polynomial-time clustering algorithm based on the
spectral projection of the observed rating matrix $\widehat{R}$. The description is given in Algorithm \ref{alg:spectralmethod}.

Step 1 of the algorithm produces two subsets, $\Omega_1$ and $\Omega_2,$ of $\Omega$ such that: $\mathrm{1})$ for $i\in \{1,2\},$ each rating is observed in $\Omega_i$ with probability $\frac{1-\epsilon}{2},$ independently of other elements; and $\mathrm{2})$ $\Omega_1$ is independent of $\Omega_2.$ The purpose of Step 1 is to remove dependency between Step 2 and Steps 3, 4 in our proof. In particular, to establish our theoretical results, we identify the initial clustering of users and movies using $\Omega_1$, and then majority voting and reclustering are done using $\Omega_2$. In practice, one can simply use the same set of observations, i.e., $\Omega_1 = \Omega_2 = \Omega$.

The following theorem shows that the spectral method exactly recovers the user and movie clusters under a condition stronger than \eqref{EqCondition}. In particular, we show that Step 3 exactly recovers the block rating matrix $B$ and Step 4 cleans up clustering errors made in Step 2.
\begin{theorem} \label{ThmSpectralMethod}
If
\begin{align}
n  (1-\epsilon )> C  r^2 \log^2 n, \label{EqSpectralCondition}
\end{align}
for a positive constant $C$, then Algorithm \ref{alg:spectralmethod}  with $\tau=12 (1-\epsilon)^{1/2} r \log n $ a.a.s. exactly recovers user and movie clusters, and the rating matrix $R$.
\end{theorem}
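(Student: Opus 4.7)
The plan is to control $P_r(\widehat{R}^{(1)})$ in spectral and Frobenius norm about its mean $\bar{R}^{(1)} := \mathbb{E}[\widehat{R}^{(1)}] = \alpha R$, where $\alpha = (1-\epsilon)(1-2p)/2$ and $R$ has rank at most $r$, and then track each of the four steps in turn. A matrix Bernstein-type inequality applied to $\widehat{R}^{(1)} - \bar{R}^{(1)}$, whose entries are independent, bounded by $1$, and have variance $O(1-\epsilon)$, should give $\|\widehat{R}^{(1)} - \bar{R}^{(1)}\| = O(\sqrt{n(1-\epsilon)})$ a.a.s.\ under \eqref{EqSpectralCondition}. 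Since $\bar{R}^{(1)}$ has rank at most $r$, the best rank-$r$ approximation satisfies $\|P_r(\widehat{R}^{(1)}) - \widehat{R}^{(1)}\| \le \|\bar{R}^{(1)} - \widehat{R}^{(1)}\|$, so the triangle inequality yields $\|P_r(\widehat{R}^{(1)}) - \bar{R}^{(1)}\| = O(\sqrt{n(1-\epsilon)})$, and since this difference has rank at most $2r$, $\|P_r(\widehat{R}^{(1)}) - \bar{R}^{(1)}\|_F^2 \le 2r \cdot O(n(1-\epsilon)) = O(nr(1-\epsilon))$.

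For Step 2, call user $i$ \emph{good} if $\|x_i - \bar{x}_i\| \le \tau/3$, where $\bar{x}_i$ is the $i$th row of $\bar{R}^{(1)}$. A Markov counting argument on the Frobenius bound gives at most $O(nr(1-\epsilon)/\tau^2) = O(n/(r\log^2 n))$ bad users. Separately, a Chernoff bound on the random $\pm 1$ entries of $B$ and a union bound over $\binom{r}{2}$ cluster pairs show a.a.s.\ that $\|\bar{x}_i - \bar{x}_{i'}\| \ge c(1-\epsilon)\sqrt{n}$ for every pair of users in distinct true clusters; under \eqref{EqSpectralCondition} this exceeds $2\tau$. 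Hence two good users in the same true cluster lie within $2\tau/3 < \tau$, while two good users in different clusters lie beyond $\tau$. Since the overall bad-user fraction is $o(1/r)$, a union bound over the $r$ random seeds shows a.a.s.\ that every seed is good; each output $\widehat{C}_k$ then matches some true $C_{\pi(k)}$ up to $O(K/\log^2 n)$ misclassifications, and the same argument produces a permutation $\sigma$ for the movie clusters $\widehat{D}_l$.

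For Step 3, decompose $\widehat{V}_{kl}$ into a sum over the $(1-o(1))K^2$ correctly-clustered pairs plus an error sum from misclustered indices. The main sum has mean $\Theta(K^2(1-\epsilon))\,B_{\pi(k),\sigma(l)}$ and, by Hoeffding on the $\Omega_2$ observations (which are independent of $\Omega_1$), fluctuates by $O(K\sqrt{(1-\epsilon)\log n})$, while the error sum has expected magnitude at most $O(K^2(1-\epsilon)/\log^2 n)$ with even smaller Hoeffding fluctuation. Under \eqref{EqSpectralCondition} both perturbations are $o(K^2(1-\epsilon))$, so $\mathrm{sign}(\widehat{V}_{kl}) = B_{\pi(k),\sigma(l)}$; a union bound over the $r^2$ blocks gives $\widehat{B} = B$ up to relabeling. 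For Step 4, once $\widehat{B} = B$ after relabeling, each center $\mu_k$ agrees with the true row $R_{i,\cdot}$ of any $i \in C_{\pi(k)}$ on a $1-o(1)$ fraction of coordinates. For such $i$ and any $k' \neq k$, the expectation of $\langle \widehat{R}^{(2)}_{i,\cdot}, \mu_k - \mu_{k'}\rangle$ is $\Theta(n(1-\epsilon))$ by the same row-separation estimate used in Step 2, while Hoeffding gives fluctuation $O(\sqrt{n(1-\epsilon)\log n})$. Since $n(1-\epsilon) \gg \log n$ under \eqref{EqSpectralCondition}, a union bound over $n$ users and $r$ alternative clusters yields a.a.s.\ exact reclustering of the users; the movie case is symmetric.

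The main obstacle is sharpening the spectral bound of Step 1: off-the-shelf matrix Bernstein tends to give $O(\sqrt{n(1-\epsilon)\log n})$, losing a $\log n$ factor that the $\log^2 n$ slack in \eqref{EqSpectralCondition} must absorb. In the very sparse regime $\epsilon \to 1$, one may need to trim rows and columns of $\widehat{R}^{(1)}$ having exceptionally many observations before taking the rank-$r$ projection, in the spirit of standard sparse matrix completion arguments. A secondary technical subtlety is that $\mu_k$ in Step 4 is not strictly independent of the row $\widehat{R}^{(2)}_{i,\cdot}$, because row $i$ of $\widehat{R}^{(2)}$ enters the majority votes defining $\widehat{B}$; this is dispatched cleanly by conditioning on the high-probability event $\widehat{B} = B$, after which $\mu_k$ is measurable with respect to $\widehat{R}^{(1)}$ alone.
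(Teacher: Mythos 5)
Your proposal follows essentially the same route as the paper's proof: sample splitting, a spectral-norm bound on $\widehat{R}^{(1)}-\bar{R}^{(1)}$ converted via the rank-$2r$ structure into a Frobenius bound, a Markov count of ``bad'' rows, threshold clustering driven by the row-separation estimate \eqref{Eq:Concentration} together with an all-seeds-good event, majority voting that exploits the independence of $\Omega_2$ from the Step-2 clusters, and a nearest-center cleanup conditioned on $\widehat{B}=B$. The one point of divergence is exactly the step you flag as the main obstacle: the paper avoids the $\sqrt{\log n}$ loss by invoking Vu's concentration theorem (Corollary~\ref{result:concentration}), which gives $\|\widehat{R}^{(1)}-\bar{R}^{(1)}\|\le 3\sigma^{(1)}\sqrt{n}$ and is applicable here because \eqref{EqSpectralCondition} combined with the standing assumption $r=\Omega(\log n)$ yields $n(1-\epsilon)=\Omega(\log^4 n)$. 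Be aware that the slack in \eqref{EqSpectralCondition} would \emph{not} actually absorb the extra $\log n$ from off-the-shelf matrix Bernstein: it inflates the bad-user count from $K\log^{-2}n$ to $K\log^{-1}n$, and the probability that all $r$ random seeds are good then degrades to $1-\Theta(\log r/\log n)$, which need not tend to one when $r$ is polynomial in $n$. So either the sharper spectral inequality (as in the paper), your proposed trimming, or a more robust seed-selection rule is genuinely required at that point; with that repaired, the rest of your argument goes through as in the paper.
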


Algorithm \ref{alg:spectralmethod} is also easy to implement as $\tau$ only depends on parameters $\epsilon$ and $r$. As mentioned before, the erasure probability $\epsilon$ can be reliably estimated from
 $\widehat{R}$ using empirical statistics. The number of clusters $r$ can be reliably estimated by searching for the largest eigen-gap in the spectrum of $\widehat{R}$ (See Algorithm 2 and Theorem 3 in \cite{Chen12} for justification). We further note that the threshold $\tau$ used in the theorem can be replaced by $C_1 (1-\epsilon)^{1/2} r \log n$ for any constant $C_1\ge 12$.

{\bf Comparison to previous work} \enskip
Variants of spectral method are widely used for clustering nodes in a graph. Step 2 of Algorithm \ref{alg:spectralmethod} for approximate clustering has been previously proposed and it is analyzed in \cite{Kannan09}. In \cite{McSherry01}, an adaptation of Step 1 is shown to exactly recover a fixed number of clusters under the planted partition model. More recently, \cite{Yu11} proves an upper bound on the number of nodes ``mis-clustered'' by spectral method under the stochastic block model with a growing number of clusters.

Compared to previous work, the main novelty of Algorithm \ref{alg:spectralmethod} is Steps 1, 3, and 4 which allow for exact cluster recovery even with a growing number of clusters. To our knowledge, Theorem \ref{ThmSpectralMethod} provides the {first theoretical result on spectral method for exact cluster recovery with a growing number of clusters}.

\section{Proofs} \label{Sec:Proofs}
\subsection{Proof of Theorem \ref{ThmLowerbound}}
Without loss of generality, suppose that users $1, 3, \ldots, 2K-1$ are in cluster $1$ and users $2,4, \ldots, 2K$ are in cluster $2$. We construct a block-constant matrix with the same movie cluster structure as $R$ but a different user cluster structure. In particular, under $\tilde{R}$, user $1$ forms a new cluster with users $2i, i=2,\ldots, K$ and user $2$ forms a new cluster with users $2i-1, i=2, \ldots,K$.

Let $i$-th row of $\tilde{R}$ be identical to the $i$-th row of $R$ for all $i > 2K$. Consider all movies $j$ in movie cluster $l$. If the ratings of user $1$ to movies in movie cluster $l$ are all erased, then let $\tilde{R}_{1 j} = R_{2j}$ and $\tilde{R}_{ij}=R_{2j}$ for $i=4, 6, \ldots,2K$; otherwise let $\tilde{R}_{1j}=R_{1j}$ and $\tilde{R}_{ij}=R_{1j}$ for $i =4, 6, \ldots, 2K$. If the ratings of user $2$ to movies in movie cluster $l$ are all erased, then let $\tilde{R}_{2 j}= R_{1 j}$ and $\tilde{R}_{ij}=R_{1j}$ for $i=3, 5,\ldots,2K-1$; otherwise let $\tilde{R}_{2j}=R_{2j}$ and $\tilde{R}_{ij}=R_{2j}$ for $i=3, 5, \ldots, 2K-1$. From the above procedure, it follows that the first row of $\tilde{R}$ is identical to the $(2i)$-th row of $\tilde{R}$ for all $i =2, \ldots, K$, and the second row of $\tilde{R}$ is identical the $(2i-1)$-th row of $\tilde{R}$ for all $i=2, \ldots, K$.

We show that $\tilde{R}$ agrees with $\widehat{R}$ on all non-erased entries. We say that movie cluster $l$ is conflicting between user $1$ and user cluster $2$ if (1) user cluster $1$ and $2$ have different block rating on movie cluster $l$; and (2) the ratings of user $1$ to movies in movie cluster $l$ are not all erased; and (3) the block corresponding to user cluster $2$ and movie cluster $l$ is not totally erased. Therefore, the probability that  movie cluster $l$ is conflicting between user $1$ and user cluster $2$ equals to $\frac{1}{2} (1-\epsilon^{k^2}) (1-\epsilon^{k})$. By the union bound,
\begin{align*}
& \mathbb{P} \{ \exists \text{conflicting movie cluster between user $1$ and cluster $2$}\} \\
& \le \frac{r}{2} (1-\epsilon^{k^2}) (1-\epsilon^{k}) \le \frac{r}{2} K^3(1-\epsilon)^2 \le \delta/2,
\end{align*}
where the third inequality follows because $(1-x)^{a} \ge 1-ax$ for $a \ge 1$ and $x \ge 0$ and the last inequality follows from the assumption. Similarly, the probability that there exists a conflicting movie cluster between user $2$ and cluster $1$ is also upper bounded by $\delta/2$. Hence, with probability at least $1-\delta$, there is no conflicting movie cluster between user $1$ and cluster $2$ as well as between user $2$ and cluster $1$, and thus $\tilde{R}$ agrees with $\widehat{R}$ on all non-erased entries.

\subsection{Proof of Theorem \ref{result:combinatorial_lowerbound}}
Consider a genie-aided scenario where the set of flipped entries is revealed as side information, which is equivalent to saying that we are in the noiseless setting with $p = 0$. Then the true partition corresponding to the true user cluster structure has zero disagreement. Suppose users $1, 3, \ldots, 2K-1$ are in true cluster $1$ and users $2,4, \ldots, 2K$ are in true cluster $2$. We construct a  new partition different from the true partition by swapping user $1$ and user $2$. In particular, under the new partition, user $1$ forms a new cluster $\widehat{C}_2$ with users $2i, i=2,\ldots, K$, user $2$ forms a new cluster $\widehat{C}_1$ with users $2i-1, i=2, \ldots,K$. It suffices to show that for $k=1,2$, any two users in $\widehat{C}_k$ has zero disagreement with probability at least $3/4$, in which case the new partition has zero agreement and Algorithm \ref{alg:combinatorialmethod} cannot distinguish between the true partition and the new one.

For $k=1,2$, we lower bound the probability that any two users in $\widehat{C}_k$ has zero disagreement.
   \begin{align*}
    &\mathbb{P}(\text{Any two users in $\widehat{C}_k$ has zero disagreement}) \\
     = &1-\mathbb{P}(\text{total number of disagreements in $\widehat{C}_k$ $\geq 1$})\\
     \geq & 1-\mathbb{E}[\text{total number of disagreements in $\widehat{C}_k$}]\\
     \geq & 1-\frac{1}{2}nK(1-\epsilon)^2\geq 7/8.
   \end{align*}
By union bound, the probability that for $k=1,2$, any two users in $\widehat{C}_k$ has zero disagreement is at least $3/4$.

\subsection{Proof of Theorem \ref{result:noiseless_clustering}}
Consider a compatibility graph with $n$ vertices representing users. Two vertices $i,i'$ are connected if users $i,i'$ have zero disagreement, i.e., $D_{ii'}=0$. In the noiseless setting, each user cluster forms a clique of size $K$ in the compatibility graph. We call a clique of size $K$ in the compatibility graph a bad clique if it is formed by users from more than one cluster. Then to prove the theorem, it suffices to show that there is no bad clique a.a.s. Since the probability that bad cliques exist increases in $\epsilon$, without loss of generality, we assume $K(1-\epsilon)<1$.

Recall that $B_{kl}$ is $+1$ or $-1$ with equal probability. Define $S_k = \{l: B_{kl} = +1\}$ for $k=1, \ldots, r$. As $r\to \infty$, by Chernoff bound, we get that a.a.s., for any $k_1 \ne k_2$
\begin{align}
  |S_{k_1}\Delta S_{k_2} | \triangleq | \{ l: B_{k_1l} \ne B_{k_2l}\} | \geq \frac{r}{4}. \label{Eq:Concentration}
\end{align}
Assume this condition holds throughout the proof.

Fix a set of $K$ users that consists of users from $t$ different clusters. Without loss of generality, assume these users are from cluster $1, \dots, t$. Let $n_k$ denote the number of users from the cluster $k$ and define $n_{\max} = \max_k n_k$. By definition, $2\leq t\leq t_{\max}\triangleq \min \{r, K\}$, $n_{\max}<K$ and $\sum_{k=1}^t n_k =K$. For any movie $j$ in cluster $l$, among the $K$ ratings given by these users, there are $\sum_{k = 1}^t{n_k\1{l\in S_k}}$ ratings being $+1$ and $\sum_{k = 1}^t{n_k\1{l\not\in S_k}}$ ratings being $-1$. Let $E_j$ denote the event that the observed ratings for movie $j$ by these $K$ users are the same. Then,
\begin{align*}
   \mathbb{P}[E_j] =&1-\left(1-\epsilon^{\sum_{k = 1}^t{n_k\1{l\in S_k}}}\right)\left(1-\epsilon^{\sum_{k = 1}^t{n_k\1{l\not\in S_k}}}\right)\\
   \le& \exp \left( - (1-\epsilon^{\sum_{k = 1}^t{n_k \1 {l \in S_k}}} ) ( 1-\epsilon^{\sum_{k = 1}^t{n_k \1{ l\not\in S_k}}})  \right)  \\
   \le & \exp \Big( -\frac{1}{4} (1-\epsilon)^2  \sum_{k = 1}^t{n_k \1 {l \in S_k}}   \sum_{k = 1}^t{n_k \1 {l \not\in S_k}} \Big).
\end{align*}
Let $p_{n_1\dots n_t}$ be the probability that $K$ users, out of which $n_k$ are from cluster $k$, form a bad clique. Because $\{E_j\}$ are independent and there are $K$ movies in each movie cluster,
\begin{align}
    p_{n_1\dots n_t}
    \le& \exp \Big( -\frac{1}{4} K (1-\epsilon)^2 \sum_{l= 1}^r ( \sum_{k = 1}^t{n_i\1{l\in S_k}}\sum_{k = 1}^t{n_k \1{l\not\in S_k}} ) \Big) \nonumber \\
    =& \exp \Big( -\frac{1}{4} K (1-\epsilon)^2  \sum_{1 \le k_1<k_2 \le t} n_{k_1} n_{k_2} |S_{k_1}\Delta S_{k_2} | \Big) \nonumber \\
    \le &\exp \Big( -C_1 n (1-\epsilon)^2  \sum_{k = 1}^t n_{k}(K-n_k)  \Big) \label{EqBoundProbClique}
   \end{align}
for some constant $C_1$. For a large enough constant $C$ in the assumption regarding $m$ in the statement of the theorem and using the fact that $m=n^2(1-\epsilon)$, we have
  \begin{align}
     K \exp(-C_1n(1-\epsilon)^2(K-n_k))\leq & n^{-3}, \quad n_k\leq \frac{K}{2} , \label{EqBadCliqueCond1} \\
     K \exp(-C_1n(1-\epsilon)^2n_k)\leq & n^{-3}, \quad n_k> \frac{K}{2}.\label{EqBadCliqueCond2}
  \end{align}

  Below we show that the probability of bad cliques existing goes to zero. By the Markov inequality and linearity of expectation,
  \begin{align*}
    & \ \mathbb{P}[\text{Number of bad cliques $\geq 1$}]\nonumber \\
    \leq &\ \mathbb{E}[\text{Number of bad cliques}]\\
    \leq & \sum_{t = 2}^{t_{\max}}\binom{r}{t}\sum_{n_1+\cdots+n_t = K}\binom{K}{n_1}\cdots \binom{K}{n_t}p_{n_1\dots n_t} \nonumber \\
    \stackrel{(s)}{\leq} & \sum_{t = 2}^{t_{\max}} r^tK^t n^{-3K}+\sum_{t = 2}^{t_{\max}} \1{t\leq K-n_{\max}+1} r^t K^t n^{-6(K-n_{\max})} \nonumber \\
    = &o(1),
   \end{align*}
  where the first term in last inequality corresponds to the case of $n_{\max} \le K/2$ and the second term corresponds to the case of $n_{\max}>K/2$. They follows from \eqref{EqBoundProbClique}, \eqref{EqBadCliqueCond1}, \eqref{EqBadCliqueCond2} and the fact that $\binom{K}{n_k}\leq \min\{K^{n_k}, K^{K-n_k}\}$.

\subsection{Proof of Theorem \ref{result:convex_relaxation}}
We first introduce some notations. Let $u_{C, k}$ be the normalized characteristic vector of user cluster $k$, i.e., $u_{C, k}(i) = 1/\sqrt{K}$ if user $i$ is in cluster $k$ and $u_{C, k}(i) =0$ otherwise. Thus, $||u_{C, k}||_2 = 1$. Let $U_C = [u_{C, 1}, \dots, u_{C, r}]$. Similarly, let $v_{C, l}$ be the normalized characteristic vector of movie cluster $l$ and $V_C = [v_{C, 1}, \dots, v_{C, r}]$. It is not hard to see that the rating matrix $R$ can be written as $R = KU_CBV_C^\top$. Denote the SVD of the block rating matrix $B$ by $B = U_B\Sigma_B V_B^\top$, then the SVD of $R$ is simply $R = UK\Sigma_BV^\top$, where $U = U_CU_B$ and $V = V_CV_B$. When $r\to \infty$, $B$ has full rank almost surely \cite{BVW10}. We will assume $B$ is full rank in the following proofs, which implies that $U_BU_B^\top = I$ and $V_BV_B^\top = I$. Note that $UU^\top = U_CU_C^\top, VV^\top = V_CV_C^\top$ and $UV^\top = U_CU_BV_B^\top V_C^\top$.

We now briefly recall the subgradient of the nuclear norm \cite{Candes12}. Define $T$ to be the subspace spanned by all matrices of the form $UA^\top$ or $AV^\top$ for any $A\in \mathbb{R}^{n\times r}$. The orthogonal projection of any matrix $M\in \mathbb{R}^{n\times n}$ onto the space $T$ is given by $\mathcal{P}_T(M) = UU^\top M+MVV^\top-UU^\top MVV^\top $. The projection of $M$ onto the complement space $T^\perp$ is $\mathcal{P}_{T^\perp}(M) = M - \mathcal{P}_T(M)$. Then $M\in \mathbb{R}^{n\times n}$ is a subgradient of $||X||_*$ at $X = R$ if and only if $\mathcal{P}_T(M) = UV^\top $ and $||\mathcal{P}_{T^\perp}(M)||\leq 1$.

\begin{proof}[of Lemma~\ref{result:incoherence_loose}]
  Assume user $i$ is from user cluster $k$ and movie $j$ is in movie cluster $l$, then
  \begin{align*}
    |(UV^\top)_{ij}| = |(U_BV_B^\top)_{kl}| / K \leq 1/K = r/n,
  \end{align*}
  where the inequality follows from the Cauchy-Schwartz inequality. By definition $\mu\leq \sqrt{r}$.
\end{proof}

Next we establish the concentration property of $\widehat{R}$. By definition the conditional expectation of $\widehat{R}$ is given by
$
\mathbb{E}[\widehat{R} | R] = (1-\epsilon) (1-2p)  R:=\bar{R}.
$
Furthermore, the variance is given by
$
\text{Var} [ \widehat{R}_{ij} |R] =   (1-\epsilon)- (1-\epsilon)^2 (1-2p)^2 :=\sigma^2.
$

The following corollary applies Theorem 1.4 in \cite{Vu07} to bound the spectral norm $\| \widehat{R}- \bar{R}\|$.
\begin{corollary} \label{result:concentration}
If $\sigma^2 \ge C^\prime \log^4 n /n$ for a constant $C^\prime$, then conditioned on $R$,
\begin{align}
\| \widehat{R}-\bar{R} \| \le 3 \sigma \sqrt{n}\quad a.a.s.
\end{align}
\end{corollary}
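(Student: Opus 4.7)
The plan is to apply Vu's spectral norm concentration inequality (Theorem 1.4 of Vu07) to the centered matrix $W := \widehat{R} - \bar{R}$, conditioned on $R$. First I would verify the hypotheses required by that theorem. Given $R$, the entries $W_{ij}$ are mutually independent, since the erasure channel and the binary symmetric channel act independently on each entry of $R$. Each entry is uniformly bounded: $\widehat{R}_{ij}\in\{-1,0,+1\}$ and $\bar{R}_{ij}\in[-1,1]$, so $|W_{ij}|\le 2$ a.s. The conditional variance is exactly the quantity defined as $\sigma^2 = (1-\epsilon) - (1-\epsilon)^2(1-2p)^2$, a direct computation using the fact that $\widehat{R}_{ij}$ equals $R_{ij}$ w.p.\ $(1-\epsilon)(1-p)$, equals $-R_{ij}$ w.p.\ $(1-\epsilon)p$, and equals $0$ w.p.\ $\epsilon$.

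Since Vu's theorem is stated for symmetric matrices, I would next reduce to that setting by the standard dilation trick: form the $2n\times 2n$ symmetric matrix
\[
\widetilde{W} = \begin{pmatrix} 0 & W \\ W^\top & 0 \end{pmatrix},
\]
which has independent entries in its upper triangle (bounded by $2$, variance at most $\sigma^2$), mean zero, and satisfies $\|\widetilde{W}\|=\|W\|$. Then Vu's theorem applied to $\widetilde{W}$ (with dimension $2n$ and entry bound $K=2$) requires a condition of the form $\sigma \ge K\log^2 n/\sqrt{n}$, which is implied by $\sigma^2 \ge C'\log^4 n/n$ for a large enough constant $C'$.

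Vu's conclusion yields $\|W\| \le 2\sigma\sqrt{n} + C_0(K\sigma)^{1/2} n^{1/4}\log n$ with probability $1-o(1)$. I would then check that, under the hypothesis $\sigma^2 \ge C'\log^4 n/n$, the lower-order correction satisfies $C_0 (2\sigma)^{1/2} n^{1/4}\log n \le \sigma\sqrt{n}$, which holds provided $C'$ is sufficiently large: this absorbs the correction into the leading term and delivers the desired bound $\|\widehat{R}-\bar{R}\| \le 3\sigma\sqrt{n}$ a.a.s. Minor adjustments to the prefactor $3$ are available by tuning $C'$, but the form above is what is stated.

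The main obstacle is purely bookkeeping rather than anything deep: matching the assumption on $\sigma^2$ to Vu's hypothesis and ensuring the lower-order term in his bound is majorized by $\sigma\sqrt{n}$. Once the dilation reduction is in place and the constants are chosen correctly, the rest is a direct quotation of the cited concentration inequality.
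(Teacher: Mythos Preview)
Your proposal is correct and follows essentially the same route as the paper: form the symmetric dilation of $\widehat{R}-\bar{R}$, invoke Theorem~1.4 of \cite{Vu07}, and absorb the lower-order term $C(\sigma)^{1/2}n^{1/4}\log n$ into the leading $\sigma\sqrt{n}$ using the hypothesis $\sigma^2 \ge C'\log^4 n/n$. The only cosmetic differences are that you spell out the independence/boundedness/variance verification more explicitly, and you track the entry bound as $2$ rather than $1$ (which is in fact the correct bound, though it only affects constants).
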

\begin{proof}
We adopt the trick called dilations \cite{Tropp12}. In particular, define $A$ as
\begin{align}
A=\left[ \begin{array}{cc}
\mathbf{0} & \widehat{R}- \mathbb{E}[\widehat{R} | R]\\
\widehat{R}^\top-\mathbb{E}[\widehat{R}^\top | R] & \mathbf{0}
\end{array}
\right].
\end{align}
Observe that $\|A\| = \| \widehat{R}- \mathbb{E}[\widehat{R} | R] \|$, so it is sufficient to prove the theorem for $\|A\|$. Conditioned on $R$, $A$ is a random symmetric $2n \times 2n$ matrix with each entry bounded by $1$, and $a_{ij}$ $(1 \le i <j \le 2n)$ are independent random variables with mean $0$ and variance \emph{at most} $\sigma^2$.
By Theorem 1.4 in \cite{Vu07} , if $\sigma \ge C^\prime n^{-1/2}  \log^2 n$, then conditioned on $R$ a.a.s.
\begin{align}
\| \widehat{R}- \mathbb{E}[\widehat{R} | R] \| = \|A\| &\le 2 \sigma \sqrt{2n} + C (2 \sigma)^{1/2} (2n)^{1/4} \log (2n) \nonumber  \\
& \le 3 \sigma \sqrt{n},
\end{align}
where the last inequality holds when $n$ is sufficiently large.
\end{proof}

\begin{proof}[of Theorem~\ref{result:convex_relaxation}] 
  For any feasible $Y$ that $Y\ne R$, we have to show that
  $
    \Delta(Y) = \langle\widehat{R}, R\rangle-\lambda||R||_*-(\langle\widehat{R}, Y\rangle-\lambda||Y||_*)>0.
  $
  Rewrite $\Delta(Y)$ as
  \begin{align}
    \Delta(Y) = & \langle\bar{R}, R-Y\rangle+\langle\widehat{R}-\bar{R}, R-Y\rangle \nonumber \\
    & +\lambda(||Y||_*-||R||_*). \label{EqDeltaY}
  \end{align}
  The first term in \eqref{EqDeltaY} can be written as
  \begin{align*}
    \langle\bar{R}, R-Y\rangle &= (1-\epsilon)(1-2p)\langle R, R-Y\rangle \nonumber  \\
    &= (1-\epsilon)(1-2p)||R-Y||_1,
  \end{align*}
  where the second equality follows from the fact that $Y_{ij}\in [-1, 1]$ and $R_{ij} = \mathrm{sgn}(R_{ij})$.
  Define the normalized noise matrix $W = (\widehat{R}-\bar{R})/\lambda$. Note that $||W||_\infty\leq 1/\lambda$ and $\mathrm{Var}(W_{ij})\leq 1/9n$. The second term in \eqref{EqDeltaY} becomes $\langle\widehat{R}-\bar{R}, R-Y\rangle = \lambda\langle W, R-Y\rangle$.
  By Corollary~\ref{result:concentration}, $||W||\leq 1$ almost surely. Thus $UV^\top+\mathcal{P}_{T^\perp}(W)$ is a subgradient of $||X||_*$ at $X = R$. Hence, for the third term in \eqref{EqDeltaY},
  $\lambda(||Y||_*-||R||_*)\geq \lambda\langle UV^\top+\mathcal{P}_{T^\perp}(W), Y-R\rangle$. Therefore,
  \begin{align}
    &\Delta(Y) \nonumber \\
    &\geq (1-\epsilon)(1-2p)||R-Y||_1+\lambda\langle UV^\top -\mathcal{P}_{T}(W), Y-R\rangle \nonumber\\
    &\geq  [(1-\epsilon)(1-2p)-\lambda(||UV^\top||_\infty+||\mathcal{P}_{T}(W)||_\infty)]||R-Y||_1 \nonumber\\
    &\geq  [(1-\epsilon)(1-2p)-\lambda(\mu\sqrt{r}/n+||\mathcal{P}_{T}(W)||_\infty)]||R-Y||_1, \label{EqDeltaYLowerBound}
  \end{align}
  where the last inequality follows from definition of the incoherence parameter $\mu$. Below we bound the term $||\mathcal{P}_{T}(W)||_\infty$. From the definition of $\mathcal{P}_T$ and the fact that $U_BU_B^\top = I$ and $V_BV_B^\top = I$,
  \begin{align*}
    ||\mathcal{P}_T(W)||_\infty 
    \leq & ||U_CU_C^\top W||_\infty+||WV_CV_C^\top||_\infty \\
    & +||U_CU_C^\top WV_CV_C^\top||_\infty.
  \end{align*}
  We first bound $||U_CU_C^\top W||_\infty$. To bound the term $(U_CU_C^\top W)_{ij}$, assume user $i$ belongs to user cluster $k$ and let $\mathcal{C}_k$ be the set of users in user cluster $k$. Recall that $u_{C, k}$ is the normalized characteristic vector of user cluster $k$. Then
  \begin{align*}
    (U_CU_C^\top W)_{ij} = (u_{C, k}u_{C, k}^\top W)_{ij} = (1/K) \sum_{i'\in \mathcal{C}_k}W_{i'j},
  \end{align*}
  which is the average of $K$ independent random variables. By Bernstein's inequality (stated in the supplementary material), with probability at least $1-n^{-3}$,
  \begin{align*}
    \left|\sum_{i'\in \mathcal{C}_k}W_{i'j}\right|\leq \sqrt{\frac{2}{3r}\log n}+\frac{2\log n}{\lambda}.
  \end{align*}
  Then $||U_CU_C^\top W||_\infty\leq \frac{1}{K}\left(\sqrt{\frac{2}{3r}\log n}+\frac{2\log n}{\lambda}\right)$ with probability at least $1-n^{-1}$. Similarly we bound $||WV_CV_C^\top ||_\infty$ and $||U_CU_C^\top WV_CV_C^\top ||_\infty$. Therefore, with probability at least $1-3n^{-1}$,
  \begin{align}
    ||\mathcal{P}_T(W)||_\infty\leq \frac{C_1}{K}\left(\sqrt{\frac{\log n}{r}}+\frac{\log n}{\lambda}\right)
    \leq \frac{C_2}{K}\sqrt{\frac{\log n}{r}}, \label{EqPTW}
  \end{align}
  for some constants $C_1$ and $C_2$, where the second inequality follows from assumption~\eqref{EqCondition}. Substituting \eqref{EqPTW} into \eqref{EqDeltaYLowerBound} and by assumption~\eqref{EqCondition} again, we conclude that $\Delta(Y)>0$ a.a.s.
\end{proof}

\subsection{Proof of Theorem \ref{ThmSpectralMethod}}
The proof is divided into three parts. Recall that $x_i$ denotes the $i$-th row of $Pr(\widehat{R}^{(1)})$. We first show that, for most users, ${x}_i$ is close to the expected value conditioned on $R$. Then we show that the clusters output by Step $2$ are close to the true clusters. Finally, we show that Step $3$ exactly recovers the block rating matrix $B$ and Step $4$ exactly recovers clusters.

Define $\bar{R}^{(1)} = \E{\widehat{R}^{(1)}|R} = \frac{1}{2} (1-\epsilon) (1-2p) R$ and let $\bar{x}_i$ be the $i$-th row of $\bar{R}^{(1)}$. We call user $i$ a {\em good} user if $\|x_i-\bar{x}_i\|_2 \le  \tau/2$ where the threshold $\tau=12 (1-\epsilon)^{1/2} \log n$; otherwise it is called a {\em bad} user. Let $\mathcal{I}$ denote the set of all good users and $\mathcal{I}^c$ denote the set of all bad users. Define good movies in the same way, and let $\mathcal{J}$ denote the set of all good movies and $\mathcal{J}^c$ denote the set of all bad movies.  The following lemma shows that the number of bad users (movies) are bounded by $K \log^{-2} n$.

\begin{lemma}\label{LemmaSpectralBound}
If $\sigma^2 \ge C' \log^4 n/ n$ for a constant $C'$, then a.a.s., $|\mathcal{I}^c| \le K \log^{-2}n$ and $|\mathcal{J}^c| \le K \log^{-2}n$.
\end{lemma}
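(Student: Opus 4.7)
The plan is to control the row-wise error $\|x_i - \bar x_i\|_2$ by bounding the Frobenius norm of $P_r(\widehat R^{(1)}) - \bar R^{(1)}$ and then ``counting'' how many rows can be large. Since $\bar R^{(1)} = \tfrac{1}{2}(1-\epsilon)(1-2p)R$ has rank $r$ (inheriting the rank of $R$), and $P_r(\widehat R^{(1)})$ also has rank $r$, their difference has rank at most $2r$, which is the crucial structural fact that converts a spectral bound into a Frobenius bound with only a $\sqrt{2r}$ loss.

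First I would invoke Corollary~\ref{result:concentration}: each entry of $\widehat R^{(1)}$ is independent (by the sub-sampling design of Step~1) with erasure parameter $(1-\epsilon)/2$ and conditional variance $\sigma^2 = \tfrac{1-\epsilon}{2} - \tfrac{(1-\epsilon)^2}{4}(1-2p)^2$, and the hypothesis of the lemma is exactly the assumption needed to apply the corollary, yielding $\|\widehat R^{(1)} - \bar R^{(1)}\| \le 3\sigma\sqrt{n}$ a.a.s. Next, because $\bar R^{(1)}$ has rank $r$, Weyl's inequality gives $\|P_r(\widehat R^{(1)}) - \widehat R^{(1)}\| = \sigma_{r+1}(\widehat R^{(1)}) \le \|\widehat R^{(1)} - \bar R^{(1)}\|$; combined with the triangle inequality this yields
\begin{equation*}
\|P_r(\widehat R^{(1)}) - \bar R^{(1)}\| \;\le\; 2\|\widehat R^{(1)} - \bar R^{(1)}\| \;\le\; 6\sigma\sqrt{n}.
\end{equation*}

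Using the rank-$2r$ bound $\|A\|_F^2 \le (2r)\|A\|^2$, I then obtain
\begin{equation*}
\sum_{i=1}^{n}\|x_i - \bar x_i\|_2^2 \;=\; \|P_r(\widehat R^{(1)}) - \bar R^{(1)}\|_F^2 \;\le\; 72\, r\, \sigma^2\, n.
\end{equation*}
A row $i$ is bad iff $\|x_i - \bar x_i\|_2 > \tau/2$, so counting gives
\begin{equation*}
|\mathcal I^c| \;\le\; \frac{72\, r\, \sigma^2\, n}{(\tau/2)^2} \;=\; \frac{2\sigma^2 n}{(1-\epsilon)\, r \log^2 n}
\end{equation*}
after substituting $\tau = 12(1-\epsilon)^{1/2}r\log n$. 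Because $\sigma^2 \le (1-\epsilon)/2$, the right side is at most $n/(r\log^2 n) = K/\log^2 n$, which is the desired bound.

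The same argument applied to columns of $P_r(\widehat R^{(1)})$ gives $|\mathcal J^c| \le K\log^{-2} n$ a.a.s. The only subtle point is justifying independence of entries of $\widehat R^{(1)}$ so that Vu's concentration inequality (Corollary~\ref{result:concentration}) applies — this is why the sub-sampling in Step~1 was designed with the specific probabilities $1/2-\delta$ and $\delta$, which makes the inclusion indicators of $\Omega_1$ i.i.d.\ Bernoulli$((1-\epsilon)/2)$; the rest is routine spectral bookkeeping with no real obstacle.
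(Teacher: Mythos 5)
Your proposal is correct and follows essentially the same route as the paper's proof: Vu's concentration bound via Corollary~\ref{result:concentration}, the triangle inequality plus the optimality of the rank-$r$ projection to get $\|P_r(\widehat R^{(1)})-\bar R^{(1)}\|\le 2\|\widehat R^{(1)}-\bar R^{(1)}\|$, the rank-$2r$ conversion to a Frobenius bound, and a Markov-type row count against the threshold $\tau/2$. The only (immaterial) difference is that you carry the exact variance of $\widehat R^{(1)}$ and bound it by $(1-\epsilon)/2$ at the end, whereas the paper works with the proxy $(\sigma^{(1)})^2=\tfrac{1}{2}(1-\epsilon)$ from the start.
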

\begin{proof}
Let $(\sigma^{(1)})^2=\frac{1}{2}(1-\epsilon)$. By Corollary \ref{result:concentration}, $\|\widehat{R}^{(1)}-\bar{R}^{(1)}\| \le 3 \sigma^{(1)} \sqrt{n}$. Note that
\begin{align}
\|P_r(\widehat{R}^{(1)} )-\bar{R}\| & \le \|P_r(\widehat{R}^{(1)} )-\widehat{R}^{(1)}\|+\|\widehat{R}^{(1)}-\bar{R}\| \nonumber \\
& \le 2 \|\widehat{R}^{(1)}-\bar{R}\|, \nonumber
\end{align}
where the second inequality follows from the definition of $P_r(\widehat{R}^{(1)})$ and the fact that $\bar{R}$ has rank $r$. Since both $P_r(\widehat{R}^{(1)} )$ and $\bar{R}$ have rank $r$, the matrix $P_r(\widehat{R}^{(1)} )-\bar{R}$ has rank at most $2r$, which implies that
\begin{align*}
\|P_r(\widehat{R}^{(1)} )-\bar{R}\|^2_F \le 8r \|\widehat{R}^{(1)}-\bar{R}\|^2\leq 72 (\sigma^{(1)})^2 n r.
\end{align*}
As $\sum_{i=1}^n \|x_i - \bar{x}_i \|^2_2 = \|P_r(\widehat{R}^{(1)} )-\bar{R}\|^2_F$, we conclude that there are at most $K \log^{-2} n$  users with $$\|x_i-\bar{x}_i\|_2 > 6\sqrt{2} \sigma^{(1)} r \log n = \tau/2.$$ Similarly we can prove the result for movies.
\end{proof}

The following proposition upper bounds the set difference between the estimated clusters and the true clusters by $K\log^{-2} n$. Let $C^*_1, \ldots, C^*_r$ be the true user clusters and $\Delta$ denote the set difference.

\begin{proposition} \label{PropSpectralRecovery}
Assume the assumption of Theorem \ref{ThmSpectralMethod} holds. Step 2 of Algorithm \ref{alg:spectralmethod} outputs $\{\widehat{C}_{k} \}_{k=1}^r$ and $\{\widehat{D}_{l}\}_{1=1}^r$ such that, up to a permutation of cluster indices, a.a.s., $\widehat{C}_k \Delta C^*_k \subset \mathcal{I}^c$ and $\widehat{D}_l \Delta D^*_l \subset \mathcal{J}^c$ for all $k, l$. It follows that for all $k,l$,
\begin{align}
| \widehat{C}_{k} \Delta C^*_k | \le \frac{K}{\log^2 n}, \quad  | \widehat{D}_{l} \Delta D^*_l | \le \frac{K}{\log^2 n}. \label{EqSpectralApprox}
\end{align}
\end{proposition}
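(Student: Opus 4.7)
The plan is to combine a geometric separation argument for the projected rows with an inductive analysis of the greedy assignment in Step~2. Let $\bar{x}_i$ denote the $i$-th row of $\bar{R}^{(1)} = \frac{1}{2}(1-\epsilon)(1-2p)R$. For two good users $i,i'$ in the same true cluster we have $\bar{x}_i = \bar{x}_{i'}$, so the triangle inequality and the definition of goodness give $\|x_i-x_{i'}\|_2 \le \frac{\tau}{2}+\frac{\tau}{2}=\tau$. For two good users from distinct true clusters $k_1\neq k_2$, I would lower bound $\|\bar{x}_i-\bar{x}_{i'}\|_2$ by reusing the Chernoff bound from the proof of Theorem~\ref{result:noiseless_clustering}, which established that a.a.s.\ $|S_{k_1}\Delta S_{k_2}|\ge r/4$ for every pair. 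The two rows then differ on at least $Kr/4 = n/4$ coordinates, each by $(1-\epsilon)(1-2p)$, so $\|\bar{x}_i-\bar{x}_{i'}\|_2 \ge \frac{1}{2}(1-2p)(1-\epsilon)\sqrt{n}$. Assumption~\eqref{EqSpectralCondition} with $C$ large (relative to $1/(1-2p)^2$) makes this at least $3\tau$, hence $\|x_i-x_{i'}\|_2 \ge \|\bar{x}_i-\bar{x}_{i'}\|_2 - \tau > 2\tau$, strictly above the threshold.

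Next I would induct on $k$. Assume iterations $1,\ldots,k-1$ have produced $\widehat{C}_1,\ldots,\widehat{C}_{k-1}$ such that, after some relabeling, $\widehat{C}_j \Delta C^*_j \subset \mathcal{I}^c$ for every $j<k$ and each true cluster has been matched at most once. Then the still-unclustered users consist of all good users in the $r-k+1$ unseen true clusters together with at most $|\mathcal{I}^c|$ bad users. If the initial user $i_k$ chosen at iteration $k$ is good, say $i_k \in C^*_{k^*}$ for an unseen cluster $k^*$, then by the geometric step the ball $\{i'\text{ unclustered}:\|x_{i_k}-x_{i'}\|_2\le\tau\}$ contains every good user of $C^*_{k^*}$ (none of which was clustered earlier) and no good user from any other true cluster; any additional members are bad. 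Hence $\widehat{C}_k\cap\mathcal{I} = C^*_{k^*}\cap\mathcal{I}$ and $\widehat{C}_k\Delta C^*_{k^*}\subset\mathcal{I}^c$, preserving the inductive hypothesis after the obvious relabeling.

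Finally, I would bound the probability that any initial user is bad. By Lemma~\ref{LemmaSpectralBound}, $|\mathcal{I}^c|\le K/\log^2 n$ a.a.s. At iteration $k$, conditional on all prior iterations having picked good initial users, the number of unclustered users is at least $(r-k+1)K-|\mathcal{I}^c|$, so the conditional probability of drawing a bad initial user is at most $|\mathcal{I}^c|/((r-k+1)K-|\mathcal{I}^c|)\le 2/((r-k+1)\log^2 n)$ for $n$ sufficiently large. Summing over $k=1,\ldots,r$ and using $\sum_{k=1}^r 1/k = O(\log r)$, the total failure probability is $O(\log r/\log^2 n) = O(1/\log n) \to 0$. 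Any bad users still unclustered after $r$ iterations are assigned arbitrarily but remain inside $\mathcal{I}^c$, so $|\widehat{C}_k\Delta C^*_k|\le|\mathcal{I}^c|\le K/\log^2 n$ follows, and the argument for movies is identical after swapping rows for columns. The main obstacle is controlling the cumulative risk of drawing a bad initial user across all $r$ iterations; this is what the per-iteration bound above is designed to handle, while the geometric separation itself is routine once the set-symmetric-difference tail $|S_{k_1}\Delta S_{k_2}|\ge r/4$ from the combinatorial-method analysis is invoked.
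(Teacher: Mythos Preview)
Your proposal is correct and follows essentially the same approach as the paper: the same geometric separation of good users via the triangle inequality and the $|S_{k_1}\Delta S_{k_2}|\ge r/4$ bound, the same per-iteration argument that a good initial user captures exactly the good users of its true cluster, and the same harmonic-sum estimate $\sum_{k=1}^r 1/((r-k+1)\log^2 n)=O(\log r/\log^2 n)$ for the total probability of ever drawing a bad initial user. Your version is slightly more explicit about the induction and the $-|\mathcal{I}^c|$ correction in the denominator, but there is no substantive difference.
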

\begin{proof}
It suffices to prove the conclusion for the user clusters. Consider two good users $i,i^\prime \in \mathcal{I}$. If they are from the same cluster, we have $\bar{x}_i = \bar{x}_{i'}$ and
\begin{align}
\| x_i -x_{i^\prime} \| \le \| x_i - \bar{x}_{i} \|+ \| x_{i^\prime} - \bar{x}_{i^\prime} \| \le \tau, \label{Eq:SpectralSame}
\end{align}
where the last inequality follows from Lemma~\ref{LemmaSpectralBound}. If they are from different clusters, by~\eqref{Eq:Concentration}, we have a.a.s.
\begin{align*}
\| \bar{x}_i - \bar{x}_{i^\prime} \|^2_2 = &\frac{1}{4}  (1- \epsilon)^2 (1-2p)^2 ||R_i - R_{i'}||_2^2\\
\ge &\frac{1}{4}  (1- \epsilon)^2 (1-2p)^2 n,
\end{align*}
where $R_i$ denotes the $i$-th row of $R$. Thus,
\begin{align}
\| x_i -x_{i^\prime} \| & \ge \| \bar{x}_i - \bar{x}_{i^\prime} \|- \| x_i - \bar{x}_{i} \|-\| x_{i^\prime} - \bar{x}_{i^\prime} \| \nonumber \\
&\ge \frac{1}{2}(1-\epsilon) (1-2 p) \sqrt{n}    -  \tau > \tau, \label{Eq:SpectralDiff}
\end{align}
where the last inequality follows from the assumption~\eqref{EqSpectralCondition}. Therefore, in the clustering procedure of Step $2$, if we choose a good initial user at some iteration, the corresponding estimated cluster will contain all the good users from the same cluster as the initial user and no good user from other clusters. It is not hard to see that the probability of the event that we choose a good initial user in every iteration is lower bounded by
\begin{align*}
& \left(1-\frac{1}{r\log^2 n}\right)\left(1-\frac{1}{(r-1)\log^2 n}\right)\dots\left(1-\frac{1}{\log^2 n}\right)\\
& \geq  1-\frac{1}{ \log^2 n} \left( \frac{1}{r} + \frac{1}{r-1} + \cdots + 1 \right) \\
& \geq  1-\frac{\log r}{\log^2 n} \geq 1 - \frac{1}{\log n}.
\end{align*}
Assume the above event holds. Under proper permutation, the initial good user in the $k$-th iteration is from cluster $C_k^*$ for all $k$. By the above argument, the set difference $\widehat{C}_k \Delta C^*_k \subset \mathcal{I}^c$. By Lemma \ref{LemmaSpectralBound}, ~\eqref{EqSpectralApprox} follows.
\end{proof}

\begin{proof}[of Theorem \ref{ThmSpectralMethod}]
We first show that Step $3$ of Algorithm \ref{alg:spectralmethod} exactly recovers the block rating matrix $B$. Let $V_{kl}$ denote the total
vote that the true user cluster $k$ gives to the true movie cluster $l$, i.e., $$V_{kl}= \sum_{i \in C^*_k} \sum_{j \in D^*_l} \widehat{R}^{(2)}_{ij}.$$ Then by definition of $\widehat{V}_{kl}$,
\begin{align}
| \widehat{V}_{kl} - V_{kl} |  \le & \sum_{i \in C_k^* \Delta \widehat{C}_k } \sum_{j \in D_l^* \cup \widehat{D}_l} \1{(i,j) \in \Omega_2} \nonumber \\
&+ \sum_{i \in C_k^* \cup \widehat{C}_k } \sum_{j \in D_l^* \Delta \widehat{D}_l} \1{(i,j) \in \Omega_2}. \label{Eq:VoteUpperBound}
\end{align}

Without loss of generality, assume $B_{kl} = 1$. By Bernstein inequality and assumption (\ref{EqSpectralCondition}), $ V_{kl} \ge \frac{1}{4}(1- \epsilon) (1-2 p)K^2$ a.a.s. On the other hand, as $\Omega_2$ and $\widehat{R}^{(1)}$ are independent, $\Omega_2$ is independent from $ \{\widehat{C}_k \}$ and $\{ \widehat{D}_l \}$. It follows from \eqref{EqSpectralApprox} and the Chernoff bound that each term on the right hand side of~\eqref{Eq:VoteUpperBound} is upper bounded by $(1-\epsilon)K^2 \log^{-2}n $ a.a.s. Hence, when assumption~\eqref{EqSpectralCondition} holds for some large enough constant $C$, we have $\widehat{V}_{kl}>0$ thus $\widehat{B}_{kl}=B_{kl}$.

Next we prove that Step $4$ clusters the users and movies correctly. Without loss of generality, we only prove the correctness for users. Suppose user $i$ is from cluster $k$. Recall that $R_i$ denotes the $i$-th row of $R$. When $\widehat{B} = B$, we have $\mu_{kj} = R_{ij}$ for $j\in \mathcal{J}$ by definition  and Proposition \ref{PropSpectralRecovery}. Then
\begin{align}
\langle \widehat{R}^{(2)}_i, \mu_k \rangle = &\langle \widehat{R}^{(2)}_i, R_i \rangle + \langle \widehat{R}^{(2)}_i, \mu_k-R_i \rangle \nonumber \\
\geq & \langle \widehat{R}^{(2)}_i, R_i \rangle - 2 \sum_{j \in  \mathcal{J}^c } | \widehat{R}^{(2)}_{ij}|. \label{eq:center1}
\end{align}
Similarly, for some user $i'$ from cluster $k'\ne k$,
\begin{align}
\langle \widehat{R}^{(2)}_i, \mu_{k'} \rangle = &\langle \widehat{R}^{(2)}_i, R_{i'} \rangle + \langle \widehat{R}^{(2)}_i, \mu_{k'}-R_{i'} \rangle \nonumber \\
\leq & \langle \widehat{R}^{(2)}_i, R_{i'} \rangle+2 \sum_{j \in \mathcal{J}^c } | \widehat{R}^{(2)}_{ij}| \label{eq:center2}
\end{align}
For ease of notation, let $t:=\frac{1}{2}(1-\epsilon) (1-2p)n$ and $(\sigma^{(2)})^2 = \frac{1}{2}(1-\epsilon)$.  By~\eqref{Eq:Concentration}, $\langle R_i, R_{i^\prime} \rangle \le n/2$ for all $i \neq i'$. Then conditioned on $R$, we have $\mathbb{E} [\langle \widehat{R}^{(2)}_i, R_i \rangle ] =  t $ and $\text{Var}[\langle \widehat{R}^{(2)}_i, R_i \rangle] \le n (\sigma^{(2)})^2$, and
\begin{align*}
\mathbb{E} [ \langle \widehat{R}^{(2)}_i, R_{i^\prime} \rangle  ]  =\frac{1}{2} (1- \epsilon) (1-2 p) \langle R_i, R_i^\prime \rangle \le t/2
\end{align*}
and $\text{Var}[\langle \widehat{R}^{(2)}_i, R_{i'} \rangle ] \le n (\sigma^{(2)})^2 $. Now by the Bernstein inequality and assumption (\ref{EqSpectralCondition}), we have that conditioned on $R$, a.a.s. $\langle  \widehat{R}^{(2)}_i, R_i \rangle > 7t/8 $ and  $  \langle \widehat{R}^{(2)}_i, R_{i^\prime} \rangle <5t/8$ for all $i \neq i'$.

On the other hand, because $\mathcal{J}$ and $\Omega_2$ are independent, by the Chernoff bound, a.a.s. $\sum_{j \in \mathcal{J}^c } |\widehat{R}^{(2)}_{i j } |$ is upper bounded by $(1-\epsilon) K \log^{-2} n<t/16$ for all $i$, when assumption~\eqref{EqSpectralCondition} holds for some large enough constant $C$.

Therefore, from \eqref{eq:center1} and \eqref{eq:center2}, $\langle \widehat{R}^{(2)}_i, \mu_k \rangle> \langle \widehat{R}^{(2)}_i, \mu_{k^\prime} \rangle $ for all $k' \neq k$.
\end{proof}

\section{Numerical Experiments} \label{Sec: Sim}
In this section, we illustrate the performance of the convex method and the spectral method using synthetic data.

\subsection{Convex method}
The convex program~\eqref{LowRank} can be formulated as a semidefinite program (SDP) and solved using a general purpose SDP solver. However this method does not scale well for our problem when the matrix dimension $n$ is large. Instead we propose a first-order algorithm motivated by the Singular Value Thresholding algorithm proposed in \cite{Cai10}. Consider the following convex program which introduces an additional term $\frac{\tau}{2}||Y||_F^2$ for $\tau>0$,
\begin{align}
  \min \quad & \frac{\tau}{2}||Y||_F^2-\langle \widehat{R}, Y\rangle +\lambda||Y||_* \nonumber \\
  \text{s.t.} \quad & Y_{ij}\in [-1, 1]. \label{equ:convex_prog_SVT}
\end{align}
When $\tau$ is small, the solution of \eqref{equ:convex_prog_SVT} is close to that of~\eqref{LowRank}. We solve the convex program \eqref{equ:convex_prog_SVT} using the dual gradient descent method given in Algorithm \ref{alg:convex_SVT}. Let matrix $E$ denote the matrix of all ones. Let matrices $S \ge 0 $ \footnote{For two matrices $X$ and  $X'$,  $X \ge X'$ means that $X_{ij} \ge X'_{ij}$ for all entries $(i,j)$. } and $T \ge 0$  denote the Lagrangian multipliers for the constraints $Y\leq  E$ and $Y\geq -E$, respectively. The Lagrangian function is given by
\begin{align*}
L(Y, S, T) = &  \frac{\tau}{2}||Y||_F^2-\langle \widehat{R}, Y\rangle +\lambda||Y||_*  \\
 & + \langle S, Y- E \rangle - \langle T, Y+E \rangle,
\end{align*}
and the dual function is given by  $\min_{Y} L(Y, S,T)$. The gradients of the dual function with respect to $S$ and $T$ are $Y- E$ and  $-Y-E$, respectively.

The minimizer of the Lagrangian function $L(Y,S, T)$ for fixed $S$ and $T$ can be explicitly written in terms of the soft-thresholding operator $D$ defined as follows. For any $\gamma\geq 0$ and for any matrix $X$ with SVD $X = U\Sigma V^\top$ where $\Sigma = \mathrm{diag}(\{\sigma_i\})$,
define
$$D_\gamma(X) = U\mathrm{diag}(\{\max(\sigma_i-\gamma, 0)\})V^\top.$$
Intuitively, the soft-thresholding operator $D$ shrinks the singular values of $X$ towards zero.  Applying Theorem 2.1 in \cite{Cai10}, we get
\begin{align*}
D_{\frac{\lambda}{\tau}}\left(\frac{\widehat{R}-S+T}{\tau}\right) = \arg \min_{Y} L (Y, S, T).
\end{align*}
Thus, the first update equation in~\eqref{eq: Iteration} finds the minimizer of the Lagrangian function at the current estimate of the dual variables.  The next two update equations in~\eqref{eq: Iteration} move the current estimate of the dual variables in the direction of the corresponding gradients of the dual function, and then project the new estimates to the set of matrices with nonnegative entries. The parameter $\delta>0$ is the step size.

\begin{algorithm}
\caption{Dual Gradient Descent Algorithm for \eqref{equ:convex_prog_SVT}}\label{alg:convex_SVT}
\begin{algorithmic}
\STATE \textbf{Input}: $\widehat{R}$
\STATE \textbf{Initialization}: $Y_0 = 0; S_0 = 0; T_0 = 0; \lambda>0; \tau, \delta>0. $
\WHILE {not converge}
\STATE
\begin{align}
  Y_{k+1} &=D_{\frac{\lambda}{\tau}}\left(\frac{\widehat{R}-S_k+T_k}{\tau}\right) \nonumber  \\
  S_{k+1} &= \max\{S_k+\delta(Y_k-E), 0\}  \nonumber \\
  T_{k+1}  &= \max\{T_k-\delta(Y_k+E), 0\} \label{eq: Iteration}
\end{align}
\ENDWHILE
\end{algorithmic}
\end{algorithm}

We simulate Algorithm~\ref{alg:convex_SVT} on the
synthetic data.  Assume $K$ and $\epsilon$ take the form given by
\begin{align}
K = n^\beta, \quad \epsilon = 1-n^{-\alpha} \label{Eq:form}.
\end{align}
Theorem~\ref{result:convex_relaxation} shows that the convex program~\eqref{EqCondition} recovers the rating matrix exactly when $\alpha<\beta$, assuming Conjecture~\ref{result:conjecture} holds.

We generate the observed data matrix with $n = 2048$, $p = 0$ and various choices of $\beta, \alpha\in (0, 1)$, and apply Algorithm~\ref{alg:convex_SVT}. The solution $\widehat{Y}$  is evaluated by the fraction of entries with correct signs, i.e., $\frac{1}{n^2}|\{(i, j): \mathrm{sign}(\widehat{Y}_{ij}) = R_{ij}\}|$. The result is plotted in grey scale in Figure~\ref{fig:convex_simulation}. In particular, the white area represents exact recovery and the black area represents around $50\%$ recovery, which is equivalent to random guess. The red line represents $\alpha=\beta$, which shows the performance guarantee given by Theorem \ref{result:convex_relaxation}. As we can see, the simulation results roughly match the theoretical performance guarantee.
\begin{figure}
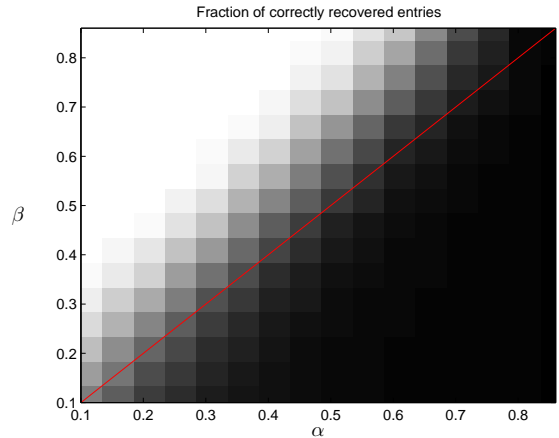

\centering
\post{fraction}{3.2in}
\centering
\caption{Simulation result of the convex method given in Algorithm ~\ref{alg:convex_SVT} with $n=2048$ and $p=0$. The $x$-axis corresponds to erasure probability $\epsilon = 1-n^{-\alpha}$ and $y$-axis
corresponds to cluster size $K= n^\beta$. The grey scale of each area represents the fraction of entries with correct signs, with white representing exact recovery and black representing around $50 \%$ recovery. The red line shows the performance of the convex method predicted by Theorem \ref{result:convex_relaxation}.}
\label{fig:convex_simulation}
\end{figure}

\subsection{Spectral Method}
We simulate the spectral method given in Algorithm \ref{alg:spectralmethod} on synthetic data. Assume $K$ and $\epsilon$ take the form of~\eqref{Eq:form}. Theorem \ref{ThmSpectralMethod}  shows that the spectral method exactly recovers the clusters when $\alpha < \frac{1}{2} (\beta+1)$.

We generate the observed data matrix according to our model with $n = 1000$ and $p = 0.05$, and  various choices of $\beta, \alpha\in (0, 1)$. We apply Algorithm \ref{alg:spectralmethod} with slight modifications. Firstly, we do not split the observation as in Step $1$ but use all the observations for the later steps, i.e., $\Omega_1 = \Omega_2 = \Omega$.  Secondly, in Step $2$ we use the more robust $k$-means algorithm to cluster users and movies instead of the thresholding based clustering algorithm.

To calculate the number of mis-clustered users (movies), we need to consider all possible permutations of the cluster indices, which is computationally expensive for large $r$. Thus,
the clustering error is instead measured by the fraction of misclassified pairs of users and movies.  In particular, we say a pair of users (movies) misclassified if they are either from the same true cluster but assigned to two different clusters or from two different true clusters but assigned to the same cluster. We say the algorithm succeeds if the clustering error is less than $5\%$.

For each $\beta$, we run the algorithm for several values of $\alpha$ and record the largest $\alpha$ for which the algorithm succeeds. The result is depicted in Fig~\ref{fig:spectral_simulation}. The solid blue line represents $\alpha=\frac{1}{2} (\beta+1)$, which shows the performance guarantee of the spectral method given by Theorem \ref{ThmSpectralMethod}.
The dotted red line represents $\alpha=\beta$, which shows the  performance guarantee of the convex method given by Theorem \ref{result:convex_relaxation}.
We can see that our simulation results are slightly better than the theoretical performance guarantee.
\begin{figure}
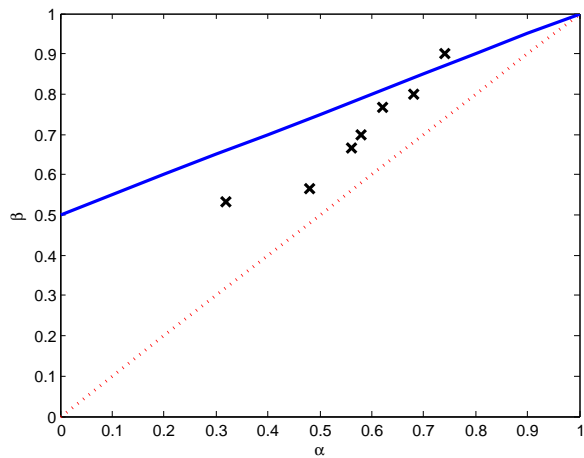

\centering
\post{SpectralMethodSimResult_v2}{3.2in}
\centering
\caption{Simulation result of the spectral method given in Algorithm \ref{alg:spectralmethod} with $n=1000$ and $p=0.05$. The $x$-axis corresponds to erasure probability $\epsilon = 1-n^{-\alpha}$ and $y$-axis
corresponds to cluster size $K= n^\beta$. Each data point in the plot indicates the maximum value of $\alpha$ for which the spectral method succeeds with a given $\beta$.  The blue solid line shows the performance of the spectral method predicted by Theorem \ref{ThmSpectralMethod}. The red dotted line shows the performance of the convex method predicted by Theorem \ref{result:convex_relaxation}.}
\label{fig:spectral_simulation}
\end{figure}


\section{Concluding Remarks} \label{Sec:Conclusion}
This paper studies the problem of inferring hidden row and column clusters of binary matrices from a few noisy observations through theoretical analysis and numerical experiments.
More extensive simulation results will be presented in a longer version of the paper. Several future directions are of interest. First, proving Conjecture \ref{result:conjecture} is important to fully understand the performance of the convex method. Second, a tight performance analysis of the ML estimation~\eqref{MLE} is needed to achieve the lower bound. Third, it is interesting to extend our analysis to block rating matrices having real-valued entries.

\bibliographystyle{IEEEtran}
\bibliography{BibCDRecommender}  

\end{document}